\documentclass[letterpaper, 10 pt, journal, twoside]{IEEEtran}
\IEEEoverridecommandlockouts 
\pdfminorversion=4 
% \documentclass{article}

%\settopmatter{printacmref=false}
\usepackage[colorlinks=true,linkcolor=black,anchorcolor=black,citecolor=black,filecolor=black,menucolor=black,runcolor=black,urlcolor=black]{hyperref}
\usepackage{url}
% \usepackage{icml2021}
%\overrideIEEEmargins 
\usepackage[skip=5pt,font=scriptsize]{caption}
\captionsetup[table]{skip=10pt}
%\usepackage{algorithm2e}% http://ctan.org/pkg/algorithm2e
% \makeatletter
% \newcommand{\nosemic}{\renewcommand{\@endalgocfline}{\relax}}% Drop semi-colon ;
% \newcommand{\dosemic}{\renewcommand{\@endalgocfline}{\algocf@endline}}% Reinstate semi-colon ;
% \newcommand{\pushline}{\Indp}% Indent
% \newcommand{\popline}{\Indm\dosemic}% Undent
% \let\oldnl\nl% Store \nl in \oldnl
% \newcommand{\nonl}{\renewcommand{\nl}{\let\nl\oldnl}}% Remove line number for one line
% \makeatother

\usepackage{graphicx}
\usepackage{subfigure}
\usepackage{etoolbox}
\usepackage{scalerel}
\usepackage{siunitx}
\usepackage{amsmath,textcomp}

\usepackage{amssymb}
\usepackage{amsthm}

% 30 dec 2021 shreyas commented these out and uncommented the algorithm2e stuff
% \usepackage{algorithm}
% \usepackage{algorithmicx}
% \usepackage{algpseudocode}
\usepackage[ruled,vlined,linesnumbered]{algorithm2e}
\SetKwInput{KwInput}{Input}
\SetKwInput{KwOutput}{Output}
\SetKwInOut{KwInOut}{Parameter}
\setlength{\algomargin}{0.8em} % required to keep line numbers inside margin

\usepackage{bbm}
\usepackage{booktabs}
\usepackage{url}
\usepackage{paralist}
\usepackage{color}
\usepackage{mathtools}
\usepackage{xspace}
\usepackage{verbatim}
\usepackage{epstopdf}
\usepackage{listings}
\usepackage{lstautogobble}
\usepackage[utf8]{inputenc}
\usepackage{calrsfs}
\usepackage{rotating,multirow}

\usepackage[utf8]{inputenc}

\usepackage{enumitem}
\usepackage{hyperref}

\usepackage{titlesec}
\titlespacing\section{0pt}{3pt}{4pt}
\titlespacing\subsection{0pt}{2pt}{3pt}
\titlespacing\subsubsection{0pt}{1pt}{2pt}

%---- space adjust--------
%%\newcommand{\subparagraph}{}
%%\usepackage[text={16cm,24cm}]{geometry}
%\usepackage{titlesec}
%\titlespacing{\section}{0pt}{0.6ex}{0.5ex}
%\titlespacing{\subsection}{0pt}{0.3ex}{0.3ex}
%\titlespacing{\subsubsection}{0pt}{0.2ex}{0.2ex}

% \setlength{\parskip}{0cm}
% \setlength{\parindent}{1em}
%%\titlespacing*{commandi}{left}{before-sep}{after-sep}[right-sep]
%\usepackage[compact]{titlesec}

% allow divide split equation between two pages
\allowdisplaybreaks

%space after and before equations
\setlength{\abovedisplayskip}{4pt}
\setlength{\belowdisplayskip}{4pt}

% notes

% theorem blocks
\newtheorem{definition}{Definition}

\newtheorem{theorem}{Theorem}

\newtheorem{assumption}{Assumption}

% text modifications
\DeclareMathAlphabet{\mathcal}{OMS}{cmsy}{m}{n}
\newcommand{\regtext}[1]{\mathrm{\textnormal{#1}}}

\newcommand{\ts}[1]{\textsuperscript{#1}}

% special characters

% sets
\newcommand{\R}{{\mathbb{R}}}

\newcommand{\N}{{\mathbb{N}}}

% operations
%\DeclareMathDelimiter{(}{\mathopen} {operators}{"28}{largesymbols}{"00}
%\DeclareMathDelimiter{)}{\mathclose}{operators}{"29}{largesymbols}{"01}

\DeclarePairedDelimiter{\norm}{\lVert}{\rVert}

\DeclareMathOperator*{\argmin}{arg\,min}
\newcommand{\diag}[1]{\regtext{diag}\!\left(#1\right)}

\newcommand{\opt}{^\star}
\newcommand{\proj}{\regtext{proj}}

% vector thingies
\newcommand{\vc}[1]{{\mathbf{#1}}}

% special matrices and sets
\newcommand{\ones}{\vc{1}}

\newcommand{\zeros}{\vc{0}}

\newcommand{\zono}[1]{\mathcal{Z}\!\left(#1\right)}
% \newcommand{\zono}[1]{\left\langle#1\right\rangle} % original shorthand

% zonotope objects
% Mahmoud: replaced \vc with \textbf
\newcommand{\ctr}{\vc{c}}

\newcommand{\Gen}{\vc{G}}
\newcommand{\Acon}{\vc{A}}
\newcommand{\bcon}{\vc{b}}
\newcommand{\coef}{\vc{z}}
\newcommand{\lb}{\underline{\vc{l}}}
\newcommand{\ub}{\overline{\vc{l}}}

% indices
\newcommand{\idx}[1]{^{(#1)}}
\newcommand{\arridx}[2]{{\left(#1\right)}_{#2}}
\newcommand{\idxi}{\idx{i}}

% labels
\newcommand{\lbl}[1]{_{\regtext{#1}}}
\newcommand{\obs}{\lbl{obs}}
\newcommand{\goal}{\lbl{goal}}

\newcommand{\safe}{\lbl{safe}}

\newcommand{\total}{\lbl{total}}
\newcommand{\brk}{\lbl{brk}}

\def \tb{\textcolor{black}}
\newcommand{\tbold}[1]{#1}

% numbers of things
\newcommand{\ngen}{{n\lbl{g}}}
\newcommand{\ncon}{{n\lbl{c}}}
\newcommand{\nrl}{{n\lbl{RL}}}

\newcommand{\niter}{{n\lbl{iter}}}

\newcommand{\nplan}{{n\lbl{plan}}}
\newcommand{\nbrk}{{n\brk}}
\newcommand{\ntraj}{{q}}
\newcommand{\nnoisegen}{{n_{\regtext{g},\noise}}}

% special vectors, functions, and sets
\newcommand{\dyn}{\vc{f}}
\newcommand{\vcstate}{\vc{x}}
\newcommand{\action}{\vc{u}}
\newcommand{\plan}{\vc{p}}
\newcommand{\noise}{\vc{w}}
\newcommand{\statespace}{X}
\newcommand{\actionspace}{U}
\newcommand{\noisespace}{W}
\newcommand{\statedata}{\vc{X}}
\newcommand{\actiondata}{\vc{U}_-}

\newcommand{\data}{(\statedata_-,\statedata_+,\actiondata)}
\newcommand{\RLstate}{\hat{\vcstate}}
\newcommand{\RLaction}{\action}
\newcommand{\policy}{\pi}
\newcommand{\mimic}{\mu}

\newcommand{\rewfunc}{\rho}
\newcommand{\rew}{r}

\newcommand{\adjustfunc}[1]{\regtext{adjust}\!\left(#1\right)}

% reachability
\newcommand{\reachset}{R}
\newcommand{\reachapprox}{{\hat{\reachset}}}
\newcommand{\reachfunc}[1]{\regtext{reach}\!\left(#1\right)}

\begin{document}

% \twocolumn[
% \icmltitle{
%     Safe Reinforcement Learning for Mobile Robot Navigation using Black-Box Reachability Analysis}
    
\title{Safe Reinforcement Learning Using Black-Box Reachability Analysis}

\author{Mahmoud Selim$^{1}$, Amr Alanwar$^{2}$, Shreyas Kousik$^{3}$, Grace Gao$^{3}$,
Marco Pavone$^{3}$, and Karl H. Johansson$^{4}$
\thanks{Manuscript received: Feb 24, 2022; Revised:
May 11, 2022; Accepted: June 12, 2022}
\thanks{This paper was recommended for publication by
Editor Jens Kober upon evaluation of the Associate Editor and Reviewers’
comments}
\thanks{This work was supported by the Swedish Research Council, and the Knut and Alice Wallenberg Foundation.
Toyota Research Institute provided funds to support this work.
The NASA University Leadership initiative (grant \#80NSSC20M0163) provided funds to assist the authors with their research, but this article solely reflects the opinions and conclusions of its authors and not any NASA entity.}
\thanks{$^{1}$Ain Shams University, Cairo, Egypt.
$^{2}$Jacobs University, Bremen, Germany.
$^{3}$Stanford University, Stanford, CA, USA.
$^{4}$KTH Royal Institute of Technology, Stockholm, Sweden.
Corresponding author: \texttt{mahmoud.selim@eng.asu.edu.eg}.}
\thanks{Digital Object Identifier (DOI): see top of this page}

}

\markboth{IEEE Robotics and Automation Letters. Preprint Version. Accepted June, 2022}
{Selim \MakeLowercase{\textit{et al.}}: BLACK-BOX REACHABILITY-BASED SAFETY} 

\maketitle

\begin{abstract}
Reinforcement learning (RL) is capable of sophisticated motion planning and control for robots in uncertain environments.
However, state-of-the-art deep RL approaches typically lack safety guarantees, especially when the robot and environment models are unknown.
To justify widespread deployment, robots must respect safety constraints without sacrificing performance. 
Thus, we propose a Black-box Reachability-based Safety Layer (BRSL) with three main components: (1)~data-driven reachability analysis for a black-box robot model, (2)~a trajectory rollout planner that predicts future actions and observations using an ensemble of neural networks trained online, and (3)~a differentiable polytope collision check between the reachable set and obstacles that enables correcting unsafe actions.
In simulation, BRSL outperforms other state-of-the-art safe RL methods on a Turtlebot 3, a quadrotor, a trajectory-tracking point mass, and a hexarotor in wind with an unsafe set adjacent to the area of highest reward.
\end{abstract}
\begin{IEEEkeywords}
Reinforcement Learning, Robot Safety, Task and Motion Planning
\end{IEEEkeywords}
%\vspace{}
\section{Introduction}
In reinforcement learning (RL), an agent perceives and reacts to consecutive states of its environment to maximize long-term cumulative expected reward \cite{book:sutton1998introduction}. %in a receding-horizon framework.
One key challenge to the widespread deployment of RL in safety-critical systems is ensuring that an RL agent's policies are safe, especially when the system environment or dynamics are a black box and subject to noise \cite{mihatsch2002risk,garcia2015comprehensive}. %garcia2020teaching, koppejan2011neuroevolutionary,
In this work, we consider RL for guaranteed-safe navigation of mobile robots, such as autonomous cars or delivery drones, where safety means collision avoidance.
We leverage RL to plan complex action sequences in concert with data-driven reachability analysis to guarantee safety for a black-box system.

\begin{figure}[t]
    \centering
    \includegraphics[width=\columnwidth]{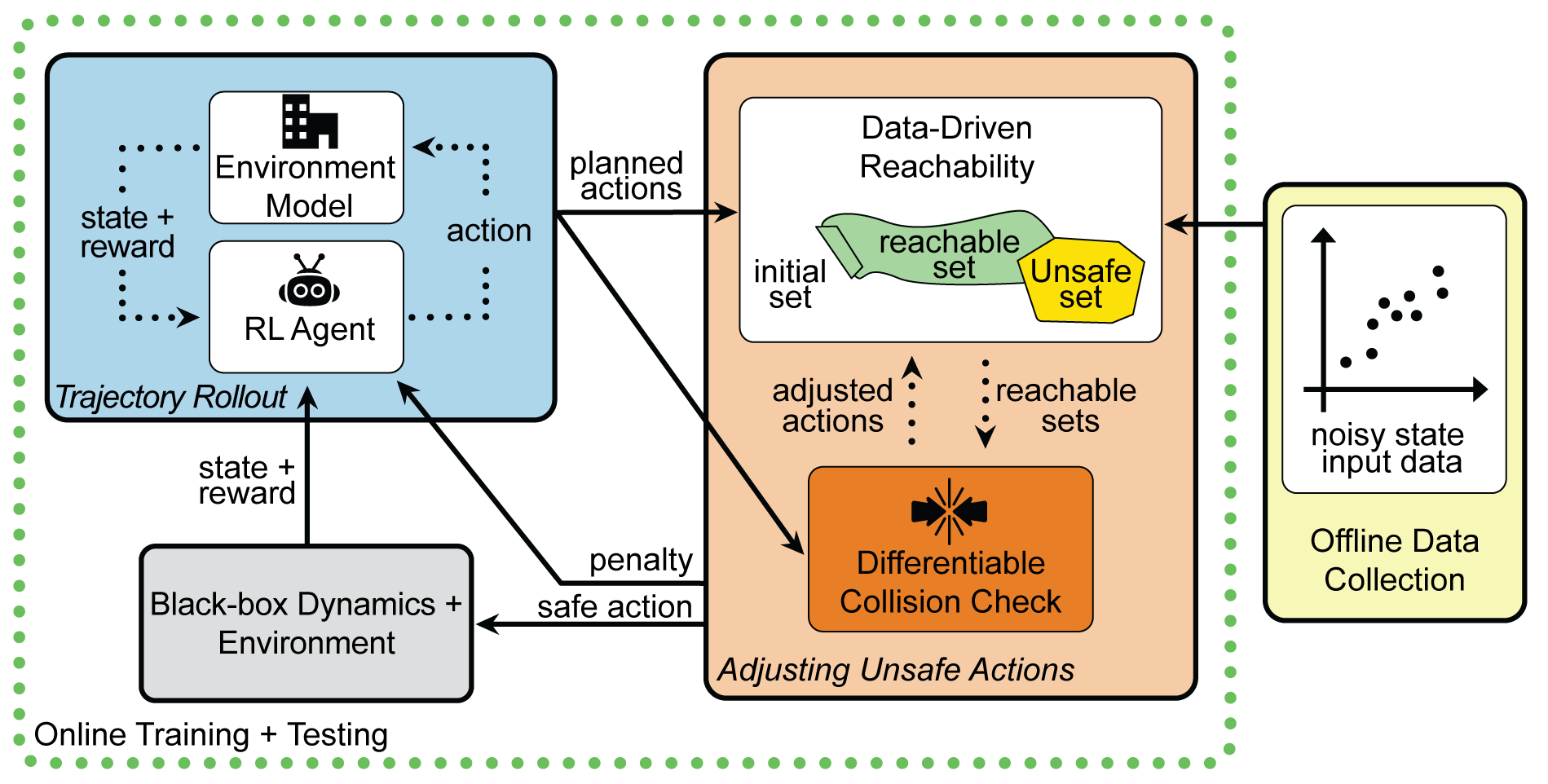}
    \vspace{-5mm}
    \caption{Overview of the proposed BRSL method (\href{https://youtube.com/playlist?list=PL7bkcpwNaUjz-S1b5KBzpgCZ1SP4DXsoi}{\textcolor{blue}{link to video}}).
    Given data collected offline (in yellow, right), we perform online safe training and deployment of an RL agent.
    The RL agent creates trajectory plans for a robot in a receding-horizon way as follows.
    Each planning iteration is one clockwise loop in the green dashed box.
    First (blue, top left), the agent predicts a possible future trajectory by rolling out its current policy with an ensemble of neural networks trained online to model the black-box environment (grey, bottom left).
    Second (orange, middle), the candidate plan is \emph{adjusted} to ensure safety using data-driven reachability and a constrained, differentiable method of collision-checking our robot's reachable sets.
    We execute a failsafe maneuver if the collision check is infeasible.
    Finally, the new safe plan is passed to the robot, and a penalty is passed to the RL agent for choosing unsafe action.}
    \label{fig:framework}
    \vspace{-6mm}
\end{figure}

\subsection{Related Work} \label{sec:related}
%\textbf{Related Work.}
%Safety has long been of interest in RL research. %Unlike traditional RL, 
Safe RL aims to learn policies that maximize expected reward on a task while respecting safety constraints during both learning and deployment \cite{garcia2015comprehensive}. 
Existing methods can be roughly classified as \emph{objective-based} or \emph{exploration-based}, depending on how safety is formulated.
We first discuss these categories, then the specific case of mobile robot navigation, which we use to evaluate our proposed method.

Objective-based methods encourage safety by penalizing constraint violations in the objective.
This can be done by relating cumulative reward to the system's risk, such as the probability of visiting error states \cite{geibel2005risk}.
In practice, this results in an RL agent attempting to minimize an empirical risk measure (that is, an approximation of the probability of entering a dangerous or undesired state).
Similarly, one can penalize the probability of losing reward (by visiting an unsafe state) for a given action \cite{mihatsch2002risk}, in which case the agent minimizes temporal differences in the reward and thus also minimizes risk.
Another approach is to restrict policies to be ergodic with high probability, meaning any state can eventually be reached from any other state \cite{moldovan2012safe}.
This is a more general problem, which comes at a cost: feasible safe policies do not always exist, and the algorithms are far more complex. 
While these methods can make an agent prefer safe actions, they cannot guarantee safety during training or deployment.
Another group of objective-based algorithms aims to modify the Markov Decision Process (MDP) that the RL agent tries to optimize.
Some model safe optimization problems as maximizing an unknown expected reward function \cite{pmlr-v37-sui15}.
However, they exploit regularity assumptions on the function wherein similar decisions are associated with similar rewards.
They also assume the bandit setting, where decisions do not cause state transitions.
% Others utilize constrained MDPs to enforce safety \cite{altman1999constrained}.
% These can be broadly classified as either offline or online approaches.
% Offline approaches \cite{achiam2017constrained,bharadhwaj2021conservative} collect data first, then perform the optimization.
% These methods can be quite conservative and hard to scale.
% Online methods \cite{BORKAR2005207,chow2017riskconstrained,bohez2019value}, on the other hand, couple both the data collection and optimization.
% However, they do not guarantee safety in training.
Others utilize constrained MDPs \cite{altman1999constrained} to enforce safety in various RL settings, either online or offline.
Online methods learn by coupling the iteration of numerical optimization algorithms (such as primal-dual gradient updates) with data collection \cite{BORKAR2005207,chow2017riskconstrained,tessler2018reward,bohez2019value}.
These algorithms have also been studied in exploration-based settings \cite{ding2020provably,efroni2020explorationexploitation}.
However, they provide no guarantees on safety during training.
On the other hand, offline schemes separate optimization and data collection \cite{achiam2017constrained,bharadhwaj2021conservative}.
They conservatively enforce safety constraints on every policy iteration but are more challenging to scale up.

Exploration-based methods modify the agent's exploration process instead of its optimization criterion.
Exploration is the process of learning about unexplored states by trying random actions or actions that are not expected to yield maximum reward (for example, an $\epsilon$-greedy strategy).
However, visiting unexplored states na\"ively can harm a robot or its environment.
To avoid this, one can aim to guarantee safety during both exploration and exploitation, in both training and testing, by modifying the exploration strategy to incorporate risk metrics \cite{riskmetricsRL}.
One can also use prior knowledge as an inductive bias for the exploration process \cite{garcia2015comprehensive,neuroevolutionaryinproceedings}; for example, one can provide a finite set of demonstrations as guidance on the task \cite{abbeel2005exploration}.
\tb{
Other approaches use control theory to guide or constrain the RL agent's actions.
Most of these approaches use system models with Lyapunov or control barrier functions (CBFs) to guarantee safety (or stability) of the system \cite{molnar2021model, liu2022robot, chow2018lyapunov}.
One can also combine data-driven approaches with model-free barrier or intervention functions \cite{cheng2019end, modelfreecbfsIVM, wagener2021safe}, or use robust CBFs to model uncertain parts of a system \cite{robustCBFarticle, cbfforunmodeleddynamics}.
Although these approaches can provide strong guarantees, most assume the system is control-affine, and need prior knowledge on some or all of the system model \cite{saferlforchanginglanes,conf:modelBasedReachRL,LewEtAl2021}, which may not always be feasible.
Finally, our method is most similar to \cite{dalal2018safe}, which learns a safety signal from data offline, then uses it to adjust an RL agent's controls at runtime.
This method uses a first-order safety approximation for fast adjustment, but (as we show) can be conservative.
}
 
%\subsection{Robot Navigation using Safe RL }
\tb{
Safe navigation is a fundamental challenge in robotics, because robots typically have uncertain, nonlinear dynamics.
Classical techniques such as A$^*$ or RRT \cite[Ch. 5]{lavalle2006planning} have been proposed to solve the navigation problem without learning.
With these methods, safety has been enforced at different levels of the planning hierarchy, such as trajectory planning \cite{kousik2020bridging}, or low-level control \cite{leung2020infusing}.
More recently, however, learning-based methods have been proposed \cite{saferlforchanginglanes,conf:modelBasedReachRL,LewEtAl2021}.
Some safe RL navigation approaches depend on learning a value function of the expected time that an agent takes to reach the desired goal \cite{chen2016decentralized, chen2018socially}.
Other approaches depend on learning the actions of the robot in an end-to-end manner \cite{long2018optimally, e2efirst, tai2018socially}, meaning that the agent attempts to convert raw sensor inputs (e.g., camera or LIDAR) into actuator commands.
The key advantage of RL over traditional planners is in accelerating computation time and solution quality. %by learning from past experience.
}

\subsection{Proposed Method and Contributions}
%\textbf{Proposed Method.}
% We propose a , which guarantees safety during exploration and exploitation in RL.
\tb{We propose a Black-box Reachability-based Safety Layer (BRSL), illustrated in Fig.~\ref{fig:framework}, to enable strict safety guarantees for RL in an entirely data-driven way, addressing the above challenges of lacking robot and environment models \emph{a priori} and of enforcing safety for uncertain systems.
We note that the main advantage of BRSL is the use of data-driven methods to provide safety guarantees for black-box system dynamics.}
BRSL enforces safety by computing a system's forward reachable set, which is the union of all trajectories that the system can realize within a finite or infinite time when starting from a bounded set of initial states, subject to a set of possible input signals \cite{conf:reach1984}. 
Then, if the reachable set does not intersect with unsafe sets, the system is verified as safe, following similar arguments as in \cite{kousik2020bridging,leung2020infusing,althoffphdthesis}.

\textbf{Limitations.}
Our method requires an approximation for the upper bound of a system's Lipschitz constant, similar to \cite{LewEtAl2021,koller2018learning,alanwar2020data_conf}.
This results in a curse of dimensionality with respect to number of samples required to approximate the constant; note other sampling-based approaches scale similarly \cite{LewEtAl2021,conf:modelBasedReachRL}. 
Furthermore, we focus on a discrete-time setting, assume our robot can brake to a stop, and assume accurate perception of the robot's surroundings.
We leave continuous-time (which can be addressed with similar reachability methods to ours \cite{conf:modelBasedReachRL,althoffphdthesis}) and perception uncertainty to future work.

% Our BRSL framework, illustrated in Fig. \ref{fig:framework}, has three components: (1) a \emph{dreaming} trajectory planner, (2) a data-driven reachability analysis technique based on \cite{alanwar2020data}, and (3) a method of adjusting unsafe actions using a novel differentiable collision check based on \cite{chung2021constrained}.
% BRSL operates as follows.
% First, offline, we collect noisy state-input data from the trajectories of the system. 
% At runtime, the RL agent generates actions in a receding-horizon way.
% In each receding-horizon iteration, BRSL performs trajectory planning using an ensemble of neural networks, trained online to mimic the environment.
% This allows the RL agent to \emph{dream} a future plan before executing it.
% Since the dreamed plan may be unsafe, we propose a novel safety layer to \emph{adjust} any unsafe actions in the plan.
% This adjustment is achieved by first computing the reachable set of the plan as a sequence of polytopes using the technique in \cite{alanwar2020data_conf}; then, by extending the differentiable polytope collision checking method in \cite{chung2021constrained}, we use constrained gradient descent to push the reachable sets out of collision and identify safe actions.

% In summary, BRSL makes the following contributions:
\textbf{Contributions.}
We show the following with BRSL:
\begin{enumerate}[leftmargin=15pt,noitemsep,partopsep=0pt,topsep=0pt,parsep=0pt]
\itemsep0em
    
    \item We propose a safety layer by integrating data-driven reachability analysis with a differentiable polytope collision check and a trajectory rollout planner.
    
    \item We demonstrate BRSL on robot navigation, where it outperforms a baseline RL agent, Reachability-based Trajectory Safeguard (RTS) \cite{conf:modelBasedReachRL}, Safe Advantage-based Intervention for Learning policies with Reinforcement (SAILR) \cite{wagener2021safe}, and Safe Exploration in Continuous Action Spaces (SECAS) \cite{dalal2018safe}.
    Our code is \href{https://github.com/Mahmoud-Selim/Safe-Reinforcement-Learning-for-Black-Box-Systems-Using-Reachability-Analysis}{\underline{\color{blue}{online}}}.
\end{enumerate}

Next, in Section \ref{sec:pb}, we provide preliminaries and formulate our safe RL problem. 
Sections \ref{sec:solu} and \ref{sec:eval} discuss and evaluate the proposed approach.
Finally, Section \ref{sec:con} presents concluding remarks and discusses future work.

% \tb{The code to recreate our findings is publicly available.}

% \footnotetext{Our code is available \href{https://github.com/Mahmoud-Selim/Safe-Reinforcement-Learning-for-Black-Box-Systems-Using-Reachability-Analysis}{online (click for link)} %\href{https://github.com/Mahmoud-Selim/Safe-Reinforcement-Learning-for-Black-Box-Systems-Using-Reachability-Analysis}{https://github.com/Mahmoud-Selim/Safe-Reinforcement-Learning-for-Black-Box-Systems-Using-Reachability-Analysis}
% }

%\input{Sections/3-related}

\section{Preliminaries and Problem Formulation} \label{sec:pb}

This section presents the notation, set representations, system dynamics, and reachable set definitions used in this work. We then pose our safe RL problem.

\subsection{Notation and Set Representations}\label{subsec:notation}
%\textbf{Notation and Set Representations.}
The $n$-dimensional real numbers are $\R^n$, the natural numbers are $\N$, and the integers from $n$ to $m$ are $n{:}m$.
% Scalars are lowercase italic and sets are uppercase italic.
% Vectors are lowercase bold and matrices are uppercase bold.
% Set-valued functions are in uppercase script.
% For vectors, we denote the element $i$ of vector $\vc{v}$ by $\vc{v}\arridx{i}$. 
% We denote the element at row $i$ and column $j$ of matrix $A$ by $A\arridx{i,j}$.
% We denote column $j$ of $A$ by $A\arridx{\cdot,j}$.
We denote the element at row $i$ and column $j$ of matrix $\vc{A}$ by %$\arridx{\vc{A}}{i,j}$.
$(\vc{A})_{i,j}$, column $j$ of $\vc{A}$ by $\arridx{\vc{A}}{:\,,j}$, and the element $i$ of vector $\vc{a}$ by $\arridx{\vc{a}}{i}$.
An $n\times m$ matrix of ones is $\ones_{n\times m}$.
For $\vc{A} \in \R^{n\times m}$ and $\vc{x} \in \R^n$, we use the shorthand $\vc{A} - \vc{x} = \vc{A} - \vc{x}\ones_{1\times m}$.
The $\diag{\cdot}$ operator places its arguments block-diagonally in a matrix of zeros.
% The Kronecker product is denoted $\otimes$.
For a pair of sets $A$ and $B$, the Minkowski sum is $A + B = \{\vc{a}+\vc{b}\ |\ \vc{a} \in A, \vc{b} \in B\}$, and the Cartesian product is $A \times B = \{(\vc{a},\vc{b})\ |\ \vc{a} \in A, \vc{b} \in B\}$.
%The $i$\ts{th} element of $A$ is $a\idxi \in A$. 

% \subsection{Set Representations}
We represent sets using constrained zonotopes, zonotopes, and intervals, because they enable efficient Minkowski sum computation (a key part of reachability analysis) \cite{althoffphdthesis} and collision checking via linear programming (critical to safe motion planning) \cite{conf:const_zono}.
A \emph{constrained zonotope} \cite{conf:const_zono} is a convex set parameterized by a center $\ctr \in \R^n$, generator matrix $\Gen \in \R^{n \times \ngen}$, constraint matrix $\Acon \in $ $\R^{\ncon \times \ngen}$, and constraint vector $\bcon \in \R^{\ncon}$ as
\begin{equation}\label{eq:conszono}
    \zono{\ctr,\Gen,\Acon,\bcon} = \big\{
        \ctr + \Gen\coef\ |\ \Acon\coef = \bcon,\ \norm{\coef}_\infty \leq 1
    \big\}.
\end{equation}
By \cite[Thm. 1]{conf:const_zono}, every convex, compact polytope is a constrained zonotope and vice-versa.
For polytopes represented as an intersection of halfplanes, we convert them to constrained zonotopes by finding a bounding box, then applying the halfspace intersection property in \cite{raghuraman2020set_ops_conzono}.

A zonotope is a special case of a constrained zonotope without equality constraints (but with $\norm{\coef}_\infty \leq 1$), which we denote $\zono{\ctr,\Gen}$.
For $Z = \zono{\ctr,\Gen} \subset \R^n$ and a linear map $L$, we have $LZ  = \zono{L\ctr, L\Gen}$; we denote $-Z = -1Z$.
The Minkowski sum of two zonotopes $Z_1 = \zono{\ctr_1,\Gen_1}$ and $Z_2 = \zono{\ctr_2,\Gen_2}$ is given by $Z_1 + Z_2 = \zono{\ctr_1+\ctr_2,[\Gen_1,\Gen_2]}$ \cite{althoffphdthesis}.
For an $n$-dimensional interval with lower (resp. upper) bounds $\lb \in \R^n$ (resp. $\ub$), we abuse notation to represent it as a zonotope $Z = \zono{\lb,\ub} \subset \R^n$, with center $\tfrac{1}{2}(\lb+\ub)$ and generator matrix $\diag{\tfrac{1}{2}(\ub - \lb)}$.

\subsection{Robot and Environment}
%\textbf{Robot and Environment.}
We assume the robot can be described as a discrete-time, nonlinear control system with state $\vcstate_{k} \in \statespace \subset \R^n$ at time $k \in \N$.
We assume the state space $\statespace$ is compact.
The input $\action_{k}$ is drawn from a zonotope $\actionspace_k \subseteq \actionspace$ at each time $k$, where $\actionspace \subset \R^m$ is a zonotope of all possible actions.
We denote process noise by $\noise_{k} \in \noisespace \subset \R^n$, where $\noisespace$ is specified later in Assumption \ref{assumption:noise_zonotope}.
Finally, we denote the black box (i.e., unknown) dynamics $\dyn: \statespace\times\actionspace\times\noisespace \to \statespace$, for which
\begin{align}\label{eq:sys}
        \vcstate_{k+1} &=  \dyn(\vcstate_{k}, \action_{k}) + \noise_{k}.
\end{align}
We further assume that $\dyn$ is twice differentiable and Lipschitz continuous, meaning there exists a \emph{Lipschitz constant} $L^\star$ such that, if $\forall$ $\vc{z}_1, \vc{z}_2 \in \R^{n+m}$ with $\vc{z}_j= (\vcstate_j,\action_j)$, then $\norm{\dyn(\vc{z}_1) - \dyn(\vc{z}_2)} \leq L^\star \norm{\vcstate_1 - \vcstate_2}$.
We denote the initial state of the system as $\vcstate_{0}$, drawn from a compact set $\statespace_{0} \subset \R^n$.
Note that this formulation leads to an MDP.

To enable safety guarantees, we leverage the notion of failsafe maneuvers from mobile robotics \cite{kousik2020bridging,magdici2016fail}. % as follows:
\begin{assumption}\label{assumption:stop}
We assume the dynamics $\dyn$ are invariant to translation in position, and the robot can brake to a stop in $\nbrk \in \N$ time steps and stay stopped indefinitely.
That is, there exists $\action\brk \in \actionspace$ such that, if the robot is stopped at state $\vcstate_k$, and if $\vcstate_{k+1} = \dyn(\vcstate_k,\action\brk)$, then $\vcstate_{k+1} = \vcstate_k$.
\end{assumption}
\noindent Note, many real robots have a braking safety controller available, similar to the notion of an invariant set \cite{LewEtAl2021,conf:modelBasedReachRL}.
Also, failsafe maneuvers exist even when a robot cannot remain stationary, such loiter circles for aircraft \cite{fridovich2019safely,althoff2015online}.
% Since the dynamics $\dyn$ are Lipschitz, and $\statespace$ is compact (velocity is bounded), there exists a finite \emph{braking distance}, denoted $d\brk > 0$.
%We investigate the impact of this assumption on our method's performance with a hexacopter in wind numerical example in Section \ref{sec:eval}.

We require that process noise obeys the following assumption for numerical tractability and robustness guarantees.

\begin{assumption}\label{assumption:noise_zonotope}
Each $\noise_{k}$ is drawn uniformly from a \emph{noise zonotope} $\noisespace = \zono{\ctr_\noise,\Gen_\noise}$ with $\nnoisegen$ generators.% (which is not time-varying).
\end{assumption}
\noindent This formulation does not handle discontinuous changes in noise.
However, there exist zonotope-based techniques to identify a change in $\noisespace$ \cite{shetty2020predicting}, after which one can compute the system's reachable set as in the present work.
We leave measurement noise and perception uncertainty to future work.
We also note, in the case of Gaussian or unbounded noise, one can overapproximate a confidence level set of a probability distribution using a zonotope \cite{althoffphdthesis,shetty2020predicting}.
% Indeed, safe navigation with measurement noise or inexact state information is an active area of research that we leave to future work.

We denote unsafe regions of state space, or \emph{obstacles}, as $\statespace\obs \subset \statespace$.
We assume obstacles are static but different in each episode, as the focus of this work is not on predicting other agents' motion.
Furthermore, reachability-based frameworks exist to handle other agents' motion \cite{leung2020infusing,vaskov2019not}, so the present work can extend to dynamic environments.

We further assume the robot can instantaneously sense all obstacles (that is, $\statespace\obs$) and represent them as a union of constrained zonotopes.
%We also note that common obstacle representations, such as occupancy grids, are polygonal, so they can be represented as constrained zonotopes.
In the case of sensing limits, one can determine a minimum distance within which obstacles must be detected to ensure safety, given a robot's maximum speed and braking distance \cite[Section 5]{kousik2020bridging}.

\subsection{Reachable Sets}\label{subsec:reachable_sets_defn}
%\textbf{Reachable Sets.}
We ensure safety by computing our robot's forward reachable set (FRS) for a given motion plan, then adjusting the plan so that the FRS lies outside of obstacles.
We define the FRS, henceforth called the reachable set, as follows:
\begin{definition} 
The reachable set $\reachset_{k}$ at time step $k$, subject to a sequence of inputs $\action_{j} \in \actionspace_{j} \subset \mathbb{R}^m$, noise $\noise_{j} \in \noisespace$ $\forall\ j \in \{ 0, \dots, k-1\}$, and initial set $\statespace_{0} \in  \mathbb{R}^n$, is the set
\begin{align}\begin{split}\label{eq:reachable_set_k}
        \reachset_{k} = \big\{&\vcstate_{k} \in \mathbb{R}^n \, \big|\ 
            \vcstate_{j+1} = \dyn(\vcstate_{j}, \action_{j}) + \noise_{j},\ \vcstate_{0} \in \statespace_{0},\\ &\action_{j} \in \actionspace_{j},\ \regtext{and}\ \noise_{j} \in \noisespace,\  \forall\ j = 0,\cdots,k-1\big\}.
\end{split}\end{align}
\end{definition}

Recall that we treat the dynamics $\dyn$ as a black box (e.g., a simulator), which could be nonlinear and difficult to model, but we still seek to conservatively approximate (that is, overapproximate) the reachable set $\reachset_k$.
% Doing so requires conservatively estimating the Lipschitz constant of the dynamics, as is done in the literature \cite{koller2018learning,alanwar2020data_conf}.

% By computing a conservative reachable set, we can conservatively approximate the Lipschitz constant of the dynamics.
% Consider the set $F = \cup_{j=0}^{k} (\reachset_{j} \times \actionspace_{j})$.
% We can approximate $L^\star \geq 0$ such that $\| \dyn(\vc{y}_1) - \dyn(\vc{y}_2) \|_2 \leq L^\star \| \vc{y}_1 - \vc{y}_2\|_2$ holds for all $\vc{y}_1, \vc{y}_2 \in F$.

\subsection{Safe RL Problem Formulation}
%\textbf{Problem Formulation.}
We denote the state of the RL agent at time $k$ by  $\RLstate_{k} \in \R^\nrl$, which contains the state $\vcstate_k$ of the robot plus information such as sensor measurements and previous actions.
At each time $k$, the RL agent chooses $\RLaction_{k}$.
Recall that $\statespace\obs \subset \R^n$ denotes obstacles.
For a given task, we construct a reward function $\rewfunc: (\RLstate_{k},\action_k) \mapsto \rew_{k} \in \R$ (examples of $\rewfunc$ are given in Section \ref{sec:eval}).
At time $k$, let $\plan_k = (\action_j)_{j=k}^\nplan$ denote a \emph{plan}, or sequence of actions, of duration $\nplan \in \N$.

Then, our safe RL problem is as follows.
We seek to learn a policy $\policy_\theta: \RLstate_{k} \mapsto \action_{k}$, represented by a neural network with parameters $\theta$, that maximizes expected cumulative reward.
Note that the policy can be deterministic or stochastic.
Since rolling out the policy na\"ively may lead to collisions, we also seek to create a safety layer between the policy and the robot (that is, to ensure $\reachset_j \cap \statespace\obs = \emptyset$ for all $j \geq k$).

% Next, we detail our proposed safe RL method.
%space 

\setlength{\textfloatsep}{0.45cm}
\setlength{\floatsep}{0.45cm}
\begin{algorithm}[t]
\caption{Safe RL with BRSL}
\label{alg:rlwithbrsl}
% \begin{algorithmic}[1]
\textbf{initialize} the RL agent with a random policy $\policy_\theta$, environment model $\mimic_\phi$, empty replay buffer $B$, max number of time steps $\niter$, and a safe plan $\plan_0$
% Collect $N_{\coef}$ $<_k, a_k, r_k, x(k + 1)>$

\For{each episode}{
    \textbf{initialize} task with reward function $\rewfunc$
    
    $\RLstate_1 \leftarrow $ observe initial environment state \label{ln:observe1}
    
    \For{$k = 1:\niter$}{
        
    %     $\plan_k \leftarrow \dreamfunc{\RLstate_k,\policy_\theta,\mimic_\phi}$ // use Alg. \ref{alg:dream} \label{ln:init_plan_by_dreaming}
    \tbold{ $\plan_k \leftarrow$ roll out a trajectory}
     
         $\reachapprox_k \leftarrow \zono{\vcstate_k,\zeros}$ // init. reachable set \label{ln:initZono}
        
         $\left(\reachapprox_j\right)_{j=k}^{k+\nplan} \leftarrow \reachfunc{\reachapprox_k,\plan_k}$ // use Alg. \ref{alg:LipReachability}  \label{ln:reachAlg1}
        
        \If{any $\reachapprox_j \cap \statespace\obs \neq \emptyset$ \label{ln:if_unsafe}}{
            
            \textbf{try} $\plan_k \leftarrow \adjustfunc{\plan_k,\statespace\obs}$ // use Alg. \ref{alg:adjust}
            
            \textbf{catch} execute failsafe maneuver; continue\label{ln:catch_unsafe} 
            %``unsafe'' $\plan_k \leftarrow \plan_{k-1}$ // keep old plan\label{ln:catch_unsafe}
        }
        
        $\action_k \leftarrow$
        get first (safe) action from $\plan_k$ \label{ln:get_safe_action}

         $r_k \leftarrow \rewfunc(\RLstate_k,\action_k)$ // get reward \label{ln:getReward}
        
         $\RLstate_{k + 1} \leftarrow $ observe next environment state \label{ln:observe2}
        
         \textbf{add} $(\RLstate_k, \action_k, r_k, \RLstate_{k + 1})$ to $B$ \label{ln:addToBuffer}
        
         \textbf{train} the RL agent $\policy_\theta$ and the environment model $\mu_\phi$ using minibatch from $B$ \label{ln:train_rl_agent}
    }
}
% \end{algorithmic}
\end{algorithm}

\section{Black-box Reachability-based Safety Layer}\label{sec:solu}

\tbold{We unite three components into our BRSL system for collision-free motion planning without a dynamic model of the robot or its surroundings \textit{a priori}.
The first component is an environment model, learned online.
To find high reward actions (i.e., motion plans) for this model, the second component is an RL agent.
Since the agent may create unsafe plans, our third component is a safety layer that combines data-driven reachability analysis with differentiable collision checking to enable safe trajectory optimization.
Theorem \ref{thm:BRSL_is_safe} summarizes safety via BRSL.}

% Our method, shown in Fig. \ref{fig:framework}, guarantees safety by preventing the RL agent from taking unsafe actions.
% When an agent chooses an unsafe plan, BRSL
% \begin{enumerate}[label=(\alph*)] %[leftmargin=15pt,noitemsep,partopsep=0pt,topsep=0pt,parsep=0pt]
% % \itemsep0em 
% \item prevents the execution of the unsafe plan,
% \item penalizes the agent with a negative reward, and 
% \item replaces the agent's plan with a safe and feasible plan.
% \end{enumerate}
% Note that the penalty helps the RL agent to learn avoiding choosing unsafe actions or frequently activating the safety layer since the safe replacement plan does not necessarily aid task completion. 
% The safety layer makes use of an existing method for data-driven reachability analysis of a black-box system \cite{alanwar2020data} as one component, which overapproximates a reachable set from noisy data under mild noise assumptions.

% \subsection{Algorithm Overview}
%textbf{Overview.}
BRSL is summarized in Algorithm \ref{alg:rlwithbrsl}.
It uses a receding-horizon strategy to create a new safe plan $\plan_k$ in each $k$\ts{th} receding-horizon motion planning iteration.
Consider a single planning iteration (that is, time step $k$) (Lines \ref{ln:observe1}--\ref{ln:train_rl_agent}).
Suppose the RL agent has previously created a safe plan $\plan_{k-1}$ (such as staying stopped indefinitely).
At the beginning of the iteration, BRSL creates a new plan $\plan_k$ by rolling out the RL agent along with an environment model. %; we call this process \emph{dreaming} (Line \ref{ln:init_plan_by_dreaming} and Algorithm \ref{alg:dream}).
Next, BRSL chooses a safe action by adjusting the \tbold{rolled-out} action sequence (Lines \ref{ln:if_unsafe}--\ref{ln:catch_unsafe}) such that the corresponding reachable set (computed with Algorithm \ref{alg:LipReachability}) is collision-free and ends with a failsafe maneuver. 
If the adjustment procedure (as in Algorithm \ref{alg:adjust}) fails to find a safe plan, then the robot executes the  failsafe maneuver.
Finally, BRSL sends the first action in the current safe plan to the robot, gets a reward, and trains the RL agent and environment model (Lines \ref{ln:get_safe_action}--\ref{ln:train_rl_agent}). 
To enable training our environment model online, we collect data in a replay buffer $B$ at each time $k$ (Line \ref{ln:addToBuffer}).
We note that BRSL can be used during both training and deployment.
That is, the safety layer can operate even for an untrained policy.
Thus, for training, we initialize $\policy_\theta$ with random weights.

\tbold{To proceed, we detail our methods for data-driven reachability and adjusting unsafe actions.}

 \subsection{Data-Driven Reachability Analysis}\label{subsec:data_driven_reachability_analysis}
 
\begin{algorithm}[t]
\caption{Black-box System Reachability \cite{alanwar2020data_conf}}\label{alg:LipReachability}
\KwInput{initial reachable set $\hat{\reachset}_0$, actions $(\action_j)_{j=k}^{k+\nplan}$} %, time-varying action space $(\actionspace_k)_{k=0}^{n}$

\KwInOut{state/action data $\data$, noise zonotope $\noisespace = \zono{\ctr_\noise,\Gen_\noise}$, Lipschitz constant $L^\star$, covering radius $\delta$}
    
% $\reachapprox_{0} \leftarrow \statespace_0$ // init. reachable set

\tbold{
$Z_\epsilon \leftarrow \zono{\zeros,\diag{\arridx{\vc{L^\star}}{1} \arridx{\vc{\delta}}{1}/2,\cdots,\arridx{\vc{L^\star}}{n} \arridx{\vc{\delta}}{n}/2}}$ \label{ln:alglipZeps}
}
\For{$j = k:(k+\nplan)$}{      
    % $\vc{M}_j \leftarrow \left(\statedata_+ - [\ctr_\noise,\cdots,\ctr_\noise] \right)\begin{bmatrix} 
    % \ones_{1 \times t\total}\\ \statedata_--1 \otimes \vcstate^\star_j \\ \actiondata-1 \otimes\action_j
    % \end{bmatrix}^\dagger$ \label{ln:alglipMtilde}
    
    \tbold{$\vc{M}_j \leftarrow \left(\statedata_+ - \ctr_\noise\right)
        \begin{bmatrix} 
            \ones_{1 \times t\total} \\
            \statedata_- - \vcstate^\star_j \\
            \actiondata - \action_j
        \end{bmatrix}^\dagger$ \label{ln:alglipMtilde}}
    
    % $\lb  \leftarrow \min_j \Bigg( {(\statedata_{+})}_{:\,,j} - \vc{M}_j \begin{bmatrix} \ones \\
    % {(\statedata_-)}_{:\,,j} -  \vcstate^\star_j\\ {(\actiondata)}_{:\,,j} - \action_j \end{bmatrix} \Bigg)$
    
    $\lb  \leftarrow \min_j \Bigg( {(\statedata_{+})}_{:\,,j} - \vc{M}_j
        \begin{bmatrix}
            1 \\
            {(\statedata_-)}_{:\,,j} -  \vcstate^\star_j \\
            {(\actiondata)}_{:\,,j} - \action_j
    \end{bmatrix} \Bigg)$
    
    $\ub \leftarrow$ same as $\lb$, but use max instead of min  \label{ln:alglipupper}
    
    %$\ub \leftarrow$ same as $\lb$, but use max instead of min
    
    %  $\lb  \leftarrow\min_j \Bigg( {(\statedata_{+})}_{:\,,j} - \vc{M}_j \begin{bmatrix} \ones \\ 
    % {(\statedata_-)}_{:\,,j} -  \vcstate^\star_j\\ {(\actiondata)}_{:\,,j} - \action_j \end{bmatrix} \Bigg)$. \label{ln:algliplower}
    
     $Z_{L} \leftarrow \zono{\lb,\ub} - \noisespace$ and $\actionspace_j \leftarrow \zono{\action_j,\zeros}$ \label{ln:alglipZl}
     
     $\reachapprox_{j+1} {\leftarrow} \vc{M}_j (\ones \times (\reachapprox_j-\vcstate^\star_j)  \times (U_j- \action_j)) +  \noisespace +  Z_L + Z_\epsilon$. \label{ln:alglipRhat}
}
\textbf{return} $(\reachapprox_j)_{j=k}^{k+\nplan}$ // overapproximates \eqref{eq:reachable_set_k}
\end{algorithm}

%\textbf{Data-Driven Reachability Analysis.}
BRSL performs data-driven reachability analysis of a plan $\plan_k = (\action_j)_{j=k}^{\nplan}$ using Algorithm \ref{alg:LipReachability}, based on \cite{alanwar2020data_conf}.
% \textbf{Reachability Analysis}~
\tbold{Algorithm \ref{alg:LipReachability} overapproximates the reachable set as in \eqref{eq:reachable_set_k} by computing a zonotope $\reachapprox_j \supseteq \reachset_j$ for each time step of the current plan.}
% Next, we describe how we collect data to enable reachability.
% We now describe our offline data collection, then the algorithm.

% \textbf{Data}~
Our reachability analysis uses noisy trajectory data of the black-box system model collected offline; we use data collected online only for training the policy and environment model.
We consider $\ntraj$ input-state trajectories of lengths $t_i \in \N$, $i = 1,\cdots,\ntraj$, with total duration $t\total = \sum_i^\ntraj t_i$.
% We denote the data as $(\vcstate\idxi_k)_{k=0}^{t_i}$, $(\action\idxi_k)_{k=0}^{t_i - 1}$, $i=1, \cdots, \ntraj$, which we collect in matrices:
% \begin{subequations}\label{eq:data}
% \begin{align}
%     \statedata_- &= \left[\vcstate\idx{1}_0,\cdots,\vcstate\idx{1}_{t_1-1}, \vcstate\idx{2}_0, \cdots, \vcstate\idx{\ntraj}_0, \cdots,\vcstate\idx{\ntraj}_{t_\ntraj-1} \right],\\
%     \statedata_+ &= \left[\vcstate\idx{1}_1,\cdots,\vcstate\idx{1}_{t_1}, \vcstate\idx{2}_1, \cdots, \vcstate\idx{\ntraj}_1, \cdots,\vcstate\idx{\ntraj}_{t_\ntraj} \right],\\
%     \actiondata &= \left[\action\idx{1}_0,\cdots,\action\idx{1}_{t_1-1}, \action\idx{2}_0,\cdots,\action\idx{\ntraj}_0,\cdots,\action\idx{\ntraj}_{t_\ntraj-1} \right].\label{eq:input_data}
% \end{align}
% \end{subequations}
% We collect all the data in a tuple $\data = (\statedata_-,\statedata_+,\actiondata)$.
\tbold{We denote the data as $(\vcstate\idxi_k)_{k=0}^{t_i}$, $(\action\idxi_k)_{k=0}^{t_i-1}$, $i=1, \cdots, \ntraj$.
To ease notation for the various matrix operations needed in Algorithm \ref{alg:LipReachability}, we collect the data in matrices:
\begin{subequations}\label{eq:data}
\begin{align}
    \statedata_- &= \left[\vcstate\idx{1}_0,\cdots,\vcstate\idx{1}_{t_1-1}, \vcstate\idx{2}_0, \cdots, \vcstate\idx{\ntraj}_0, \cdots,\vcstate\idx{\ntraj}_{t_\ntraj-1} \right],\\
    \statedata_+ &= \left[\vcstate\idx{1}_1,\cdots,\vcstate\idx{1}_{t_1}, \vcstate\idx{2}_1, \cdots, \vcstate\idx{\ntraj}_1, \cdots,\vcstate\idx{\ntraj}_{t_\ntraj} \right],\\
    \actiondata &= \left[\action\idx{1}_0,\cdots,\action\idx{1}_{t_1-1}, \action\idx{2}_0,\cdots,\action\idx{\ntraj}_0,\cdots,\action\idx{\ntraj}_{t_\ntraj-1} \right].\label{eq:input_data}
\end{align}
\end{subequations}
Note, the time steps are different in $\statedata_-$ and $\statedata_+$ to simplify considering state transitions corresponding to the actions in $\actiondata$.
}
% We use a tuple $\data = (\statedata,\actiondata)$ to collect all the data for notational convenience.
% Recall that the noise zonotope $\noisespace$ has $\nnoisegen$ generators.
% From Assumption \ref{assumption:noise_zonotope}, one can show that $\noisedata$ is drawn from a matrix zonotope: $\noisedata \in M_\noise$, where
% \begin{align}\label{eq:noise_mat_zono}
%     M_\noise = \matzono{\Ctr_\noise,(\Gen_\noise\idxi)_{i=1}^{\nnoisegen\cdot t\total}}    
% \end{align}
% is created by concatenating multiple copies of $\noisespace$ so that each $\Gen_\noise\idxi = \Gen_\noise$ as explained in  \cite{alanwar2020data_conf}.
Selecting enough data to sufficiently capture system behavior is a challenge that depends on the system, though specific sampling strategies exist for some systems \cite{conf:modelBasedReachRL}.

We must approximate the Lipschitz constant of the dynamics for our reachability analysis, which we do from the data $\data$ with the method in \cite[Section 4, Remark 1]{alanwar2020data_conf}.
% \tbold{Note, $\vc{f}: \R^{n} \times \R^{m} \to \R^{n}$  is Lipschitz continuous if there exists a \emph{Lipschitz constant} $L^\star$ such that, if $\forall$ $\vc{z}_1, \vc{z}_2 \in \R^{n+m}$ with $\vc{z}_j= (\vcstate_j,\action_j)$, then $\norm{\vc{f}(\vc{z}_1) - \vc{f}(\vc{z}_2)} \leq L^\star \norm{\vcstate_1 - \vcstate_2}$.}
\tbold{We also require a data covering radius $\delta$ such that, for any data point $\vc{z}_1 \in \statespace\times\actionspace$, there exists another data point $\vc{z}_2 \in \statespace\times\actionspace$ for which $\norm{\vc{z}_1 - \vc{z}_2}_2 \leq \delta$.}
We assume sufficiently many data points are known \emph{a priori} to upper-bound $L^\star$ and lower-bound $\delta$; and, we assume $L^\star$ and $\delta$ are the same for offline data collection and online operation. 
% \end{assumption}
% \noindent
Note, prior work assumes similar bounds \cite{koller2018learning,alanwar2020data_conf}.

\tbold{We find that the reachable set becomes conservative (i.e., large) if the same $L^\star$ and  $\delta$ are used for every dimension, because the true dynamics are typically scaled differently in each state dimension.
To mitigate this source of conservativeness, we approximate a different $(L^\star)_i$ and $(\delta)_i$ for each dimension, which we then use to compute a Lipschitz zonotope $Z_\epsilon$ (see Line \ref{ln:alglipZeps} of Algorithm \ref{alg:LipReachability}).
Note, this is an improvement over prior work \cite{alanwar2020data_conf}.
}

\subsection{Adjusting Unsafe Actions}

%\textbf{Adjusting Unsafe Actions.}
\tbold{After the RL agent \tbold{rolls out} a plan $\plan_k$, the safety layer adjusts it to ensure it is safe.
This is done by checking the intersection of the plan's reachable sets with unsafe sets.}
Note, our proposed adjustment procedure does not depend on $\policy_\theta$, only on the unsafe sets around the robot.
The plan is applied to the environment if all of its actions are safe; otherwise, we search for a safe plan.
One strategy for finding a safe plan is to sample randomly in the action space \cite{conf:modelBasedReachRL}, but this can be prohibitively expensive in the large action spaces that arise from choosing control inputs at multiple time steps.
Instead, we use gradient descent to adjust  our plan such that the reachable sets are not in collision, and such that the plan has a failsafe maneuver.
% \textbf{Adjustment.}~

We adjust unsafe actions using Algorithm \ref{alg:adjust}.
If the algorithm does not complete within the duration of one time step (in other words, we fix the rate of receding-horizon planning), we terminate it and continue our previously-found safe plan.
Our method steps through each action in a plan $\plan$ and performs the following.
First, we compute the reachable set for all remaining time steps with Algorithm \ref{alg:LipReachability} (Line \ref{ln:adjust:reach}).
Second, we collision check the reachable set (Line \ref{ln:adjust:collision_check}) as detailed below.
Third, if the reachable sets are in a collision, we compute the gradient of the collision check and perform projected gradient descent (Line \ref{ln:adjust:get_gradient_and_step}) as in fig \ref{fig:zonotope_intersection}.
% The gradient step may result in the action no longer being in its action zonotope, so we project it back to the zonotope (Line \ref{ln:adjust:project}). %; the $\proj$ operator is detailed below.
%If the actions themselves do not encode a failsafe maneuver, we enforce that one exists.
Finally, if the algorithm converges to a safe plan, we return it, or else return ``unsafe.''
Note, the final plan must have a failsafe maneuver (Line \ref{ln:adjust:check_if_safe}).

% \textbf{Differentiable Collision Checking.}~
We collision check reachable and unsafe sets, all represented as constrained zonotopes, as follows.
Consider two constrained zonotopes, $Z_1 = \zono{\ctr_1,\Gen_1,\Acon_1,\bcon_1}$ and $Z_2 = \zono{\ctr_2,\Gen_2,\Acon_2,\bcon_2}$.
Applying \cite[Prop. 1]{conf:const_zono}, their intersection is $Z_\cap = Z_1 \cap Z_2 = \zono{\ctr_{\cap},\Gen_{\cap},\Acon_{\cap},\bcon_{\cap}}$, given by
\begin{align}
Z_\cap =  \zono{\ctr_1, [\Gen_1, \zeros], \begin{bmatrix}
            \Acon_1 & \zeros \\
            \zeros & \Acon_2 \\
            \Gen_1 & -\Gen_2
        \end{bmatrix}, \begin{bmatrix}
            \bcon_1 \\ \bcon_2 \\ \ctr_2 - \ctr_1
        \end{bmatrix}}.
\end{align}
We check if $Z_1 \cap Z_2$ is empty by solving a linear program, as per \cite[Prop. 2]{conf:const_zono}:
\begin{align}\label{prog:collision_check}
    v\opt = \min_{\coef,v} \left\{v\ |\ \Acon_{\cap} \coef = \bcon_{\cap} \ \regtext{and}\ |\coef| \leq v\right\},
\end{align}
with $|\coef|$ taken elementwise; $Z_\cap$ is nonempty iff $v \leq 1$.
Note, \eqref{prog:collision_check} is feasible when $Z_1$ and $Z_2$ have feasible constraints.

%\begin{figure}
%\centering
%   \begin{subfigure}{0.49\columnwidth}
%   \centering
%   \includegraphics[scale=0.26]{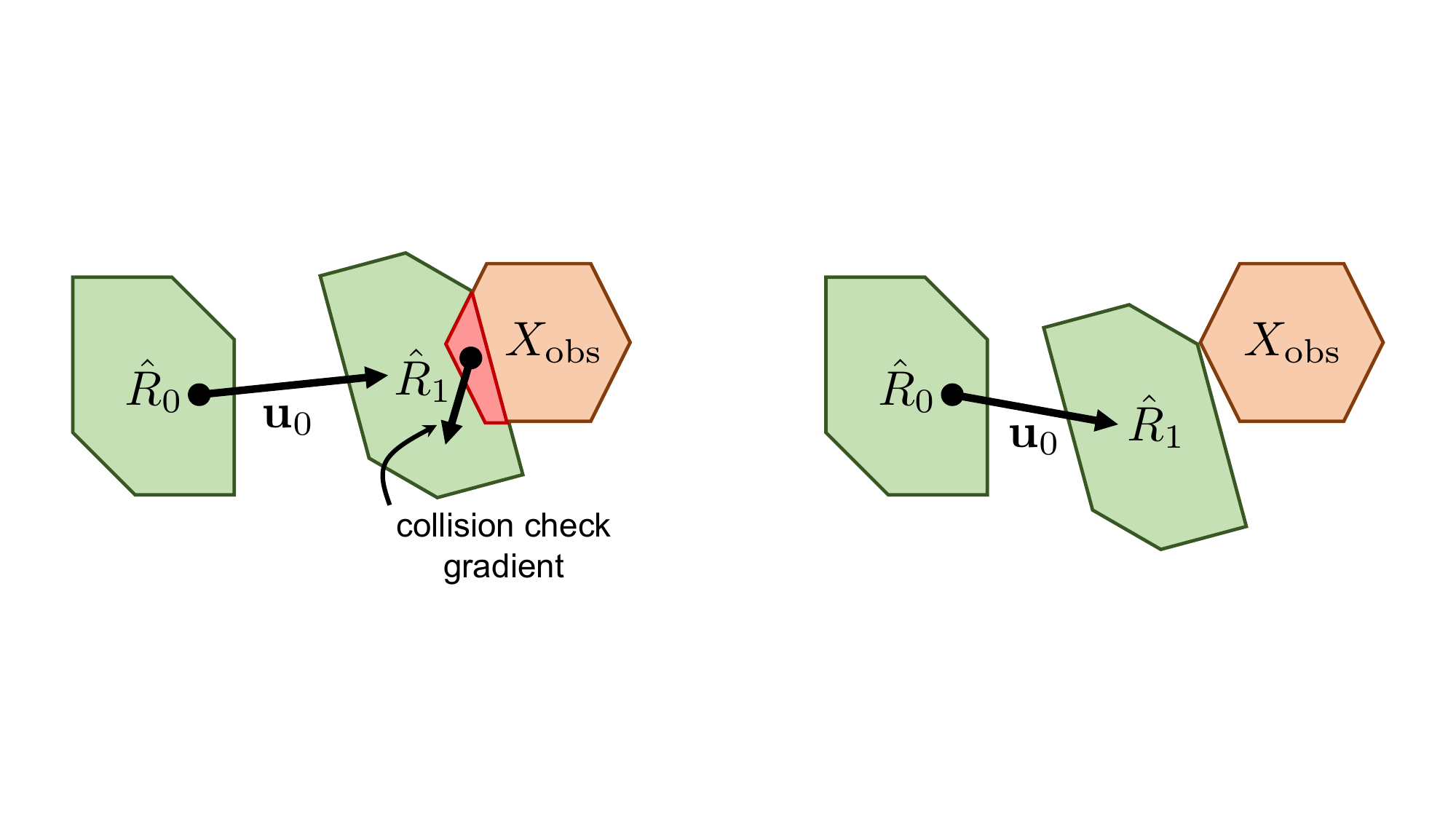}
%  \caption{}
%   \label{fig:Ng1} 
%\end{subfigure}
%\begin{subfigure}{0.49\columnwidth}
%\centering
%   \includegraphics[scale=0.26]{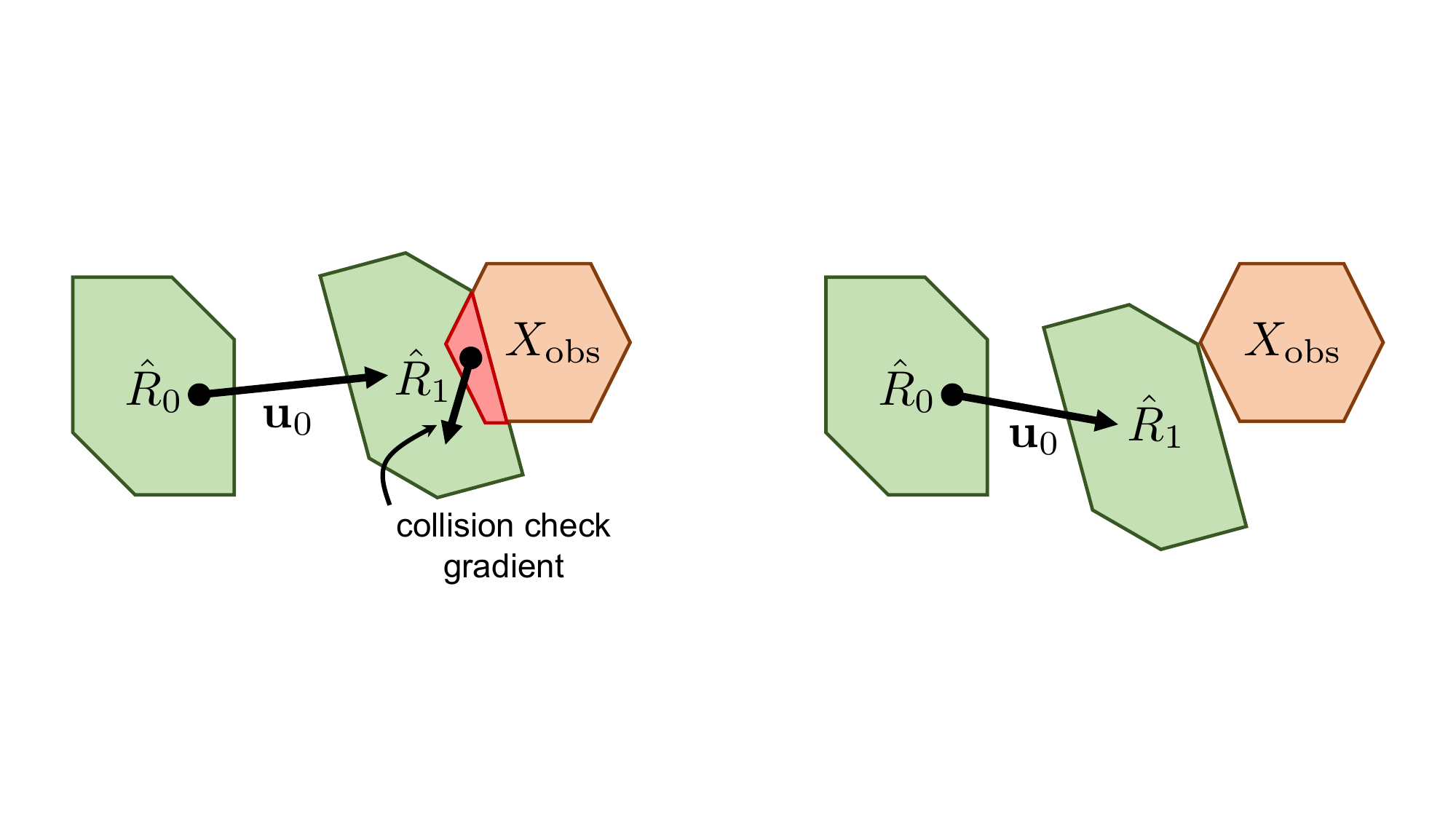}
%   \caption{}
%   \label{fig:Ng2}
%\end{subfigure}
%\caption{Our approach for pushing reachable sets, represented as zonotopes, out of intersection (see Alg. \ref{alg:adjust}).
%In (a), we compute the gradient of the collision check between the reachable set $\reachapprox_1$ and the obstacle $\statespace\obs$; in (b), we use constrained gradient descent to adjust $\action_0$ such that $\reachapprox_1\cap\statespace\obs = \emptyset$.}
%\label{fig:zonotope_intersection}
%\vspace{-2mm}
%\end{figure}

\begin{figure}


\centering     %%% not \center
\subfigure[]{\label{fig:Ng1}\includegraphics[width=0.4\columnwidth]{Figures/collision_check_a.pdf}}
\hspace{5mm}
\subfigure[]{\label{fig:Ng2}\includegraphics[width=0.4\columnwidth]{Figures/collision_check_b.pdf}}
\caption{We move zonotope reachable sets out of intersection (see Alg. \ref{alg:adjust}) by using the gradient of a collision check, shown in (a), to adjust $\action_0$ so the reachable sets are out of collision as shown in (b).} %between the reachable set $\reachapprox_1$ and the obstacle $\statespace\obs$; in (b), we use constrained gradient descent to adjust $\action_0$ such that $\reachapprox_1\cap\statespace\obs = \emptyset$.}
\label{fig:zonotope_intersection}
\vspace{-3mm}
\end{figure}

% \textbf{Differentiating the Collision Check.}~
We use gradient descent to move our reachable sets $\reachapprox_k$ out of collision.
Since we use \eqref{prog:collision_check} for collision checking, we differentiate its solution with respect to the problem parameters using \cite{amos2017optnet,chung2021constrained}.
Let $\hat{\ctr}_k$ denote the center of $\reachapprox_k$.
Per Algorithm \ref{alg:LipReachability}, $\reachapprox_k$ is a function of $\action_0,\cdots,\action_{k-1}$.
Let $(\coef\opt,v\opt)$ be an optimal solution to \eqref{prog:collision_check} when the problem parameters \tbold{(i.e., the input constrained zonotopes)} are $\reachapprox_k$ and an unsafe set.
Collision avoidance requires $v\opt > 1$ \cite[Prop. 2]{conf:const_zono}.
We compute the gradient $\nabla_{\RLaction_k} v\opt$ with respect to the input action (assuming a constant linearization point) using a chain rule recursion with $i = 0,\cdots,\nplan$ given by
\begin{align}\label{eq:collision_chain_rule}
    \nabla_{\RLaction_{h}} v\opt = \nabla_{\hat{\ctr}_k} v\opt
        \nabla_{\hat{\ctr}_{k - 1}} \hat{\ctr}_k
        \left(\prod_{j=h + 2}^{j = k - 1} \nabla_{\hat{\ctr}_{j - 1}} \hat{\ctr}_j\right) 
         \nabla_{\RLaction_{h}}  \hat{\ctr}_{h + 1},
\end{align}
with $h = k - i$.
The gradients of $\hat{\ctr}_k$ are given by
\begin{subequations}
\begin{align}
    &\nabla_{\hat{\ctr}_{k - 1}} \hat{\ctr}_k = (\vc{M}_{k - 1})_{(1:1+n),(1:1+n)},\ \regtext{and}
    \label{eq:collision_chain_rule_for_input_zono1} \\
    &\nabla_{u_{k - 1}} \hat{\ctr}_k = (\vc{M}_{k-1})_{:,(n+1:n+1+m)},
    \label{eq:collision_chain_rule_for_input_zono2}
\end{align}
\end{subequations}
where $\vc{M}_{k-1}$ is computed as in Algorithm \ref{alg:LipReachability}, Line \ref{ln:alglipMtilde}, and $n$ and $m$ are the state and action dimensions. 
After using  $\nabla_{\action_k}v\opt$ for gradient descent on $\action_k$, we project $\action_k$ to the set of feasible controls: $\proj_{\actionspace_k}(\action_k) = \argmin_{\vc{v} \in \actionspace_k} \left\{\norm{\action_k - \vc{v}}_2^2\right\}$.
The resulting controls may be unsafe, so we collision-check the final reachable sets at the end of Algorithm \ref{alg:adjust}.
% Using $\nabla_{\action_k}v\opt$ for gradient descent on $\action_k$ can cause ${\action_k \not\in U_k}$.
% Therefore, after each gradient step (Algorithm \ref{alg:adjust}, Line \ref{ln:adjust:get_gradient_and_step}), we project $\action_k$ into $\actionspace_k$ using a quadratic program: $\proj_{\actionspace_k}(\action_k) = \argmin_{\vc{v} \in \actionspace_k} \left\{\norm{\action_k - \vc{v}}_2^2\right\}$.
% Finding $\nabla_{\action_k}v\opt$ requires constrained gradient descent because we have constraints on the input actions.
% The problem in \eqref{eq:collision_chain_rule} can be solved either using projection or proximal operators since it is a convex optimization problem.

\begin{algorithm}[t]
\caption{Adjusting Unsafe Actions}
\label{alg:adjust}

\KwInput{plan $\plan_k = (\action_j)_{j=k}^{\nplan}$, obstacles $\statespace\obs$, initial reachable set $\reachapprox_k$, step size $\gamma$, time limit $t\lbl{max}$, time steps required to stop $n\brk$} 
// note $\plan_k$ has failsafe $\action\brk$ for all $j > k+\nplan$

$\plan\safe \leftarrow \plan_k$ // initialize with given plan

\For{$j = k:(k+\nplan + n\brk)$\label{ln:adjust:for_loop}}{
    \While{time limit not exceeded\label{ln:adjust:while_loop}}{
        $(\reachapprox_j)_{j=k}^{k+\nplan} \leftarrow \reachfunc{\reachapprox_j,\plan\safe}$ // use Alg. \ref{alg:LipReachability} \label{ln:adjust:reach}
    
        $v\opt \leftarrow $ collision check $\reachapprox_j \cap \statespace\obs$ using \eqref{prog:collision_check} \label{ln:adjust:collision_check}
        
        \uIf{$v\opt \leq 1$ (i.e., in collision)}{        
            $\action_j \leftarrow \proj_{\actionspace_j}\left(\action_j + \gamma \nabla_{\action_j} v\opt\right)$ // using \eqref{eq:collision_chain_rule} \label{ln:adjust:get_gradient_and_step}
            
            %$\action_j \leftarrow \proj_{\actionspace_j}(\action_j)$ \label{ln:adjust:project}
            
            %$\plan\safe\idx{j} \leftarrow \action_j$ // update current action \label{ln:adjust:update_action}
        }
        % \uElseIf{$\vcstate_n$ is not stopped}{
        %     $\action_j \leftarrow \tfrac{n-j}{n}\action_j + \tfrac{j}{n}\action\brk$ // add failsafe \label{ln:adjust:add_failsafe}
        % }
        \Else{
            \textbf{break} and restart inner while loop \label{ln:adjust:break_while_loop}
        }
    }
}

\eIf{all $\reachapprox_j \cap \statespace\obs = \emptyset$ and $\vcstate_n$ is stopped\label{ln:adjust:check_if_safe}}{
    % $p \leftarrow \sum_{k=0}^n \norm{\plan\idx{k} - \plan\safe\idx{k}}_2$ // compute penalty
    
    \textbf{return} $\plan\safe = (\action_j)_{j=k}^{\nplan}$ // found new safe plan% and $p$
}{
    \textbf{return} error ``unsafe'' // failed to find safe plan\label{ln:adjust:return_unsafe}
}
\end{algorithm}

\begin{figure*}[t]
\centering     %%% not \center
\begin{subfigure}[h][\tbold{Turtlebot3 Environment}]{\label{fig:tb_env}\includegraphics[scale=0.1]{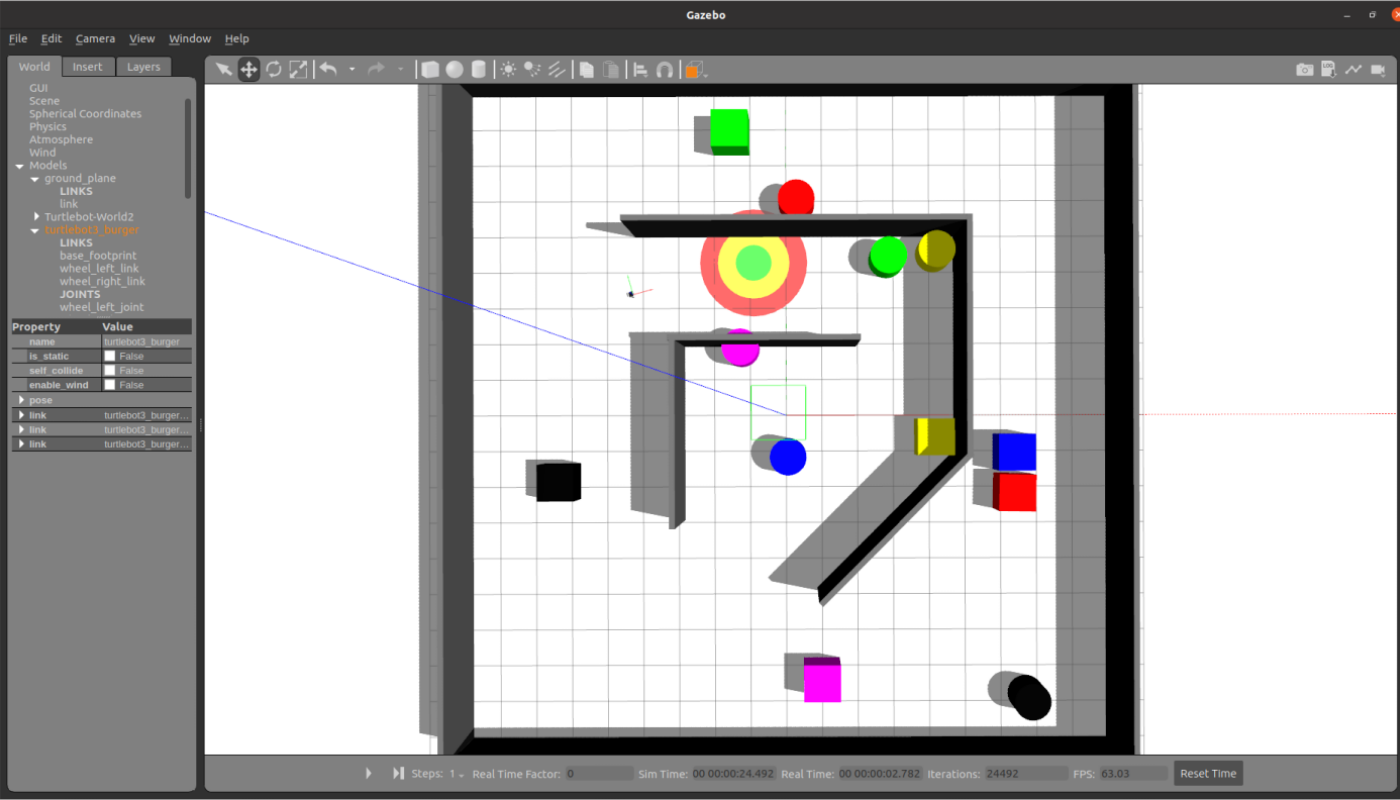}}
\end{subfigure}
\begin{subfigure}[h][\tbold{Quadrotor Environment}]{\label{fig:quad_env}\includegraphics[scale=0.1]{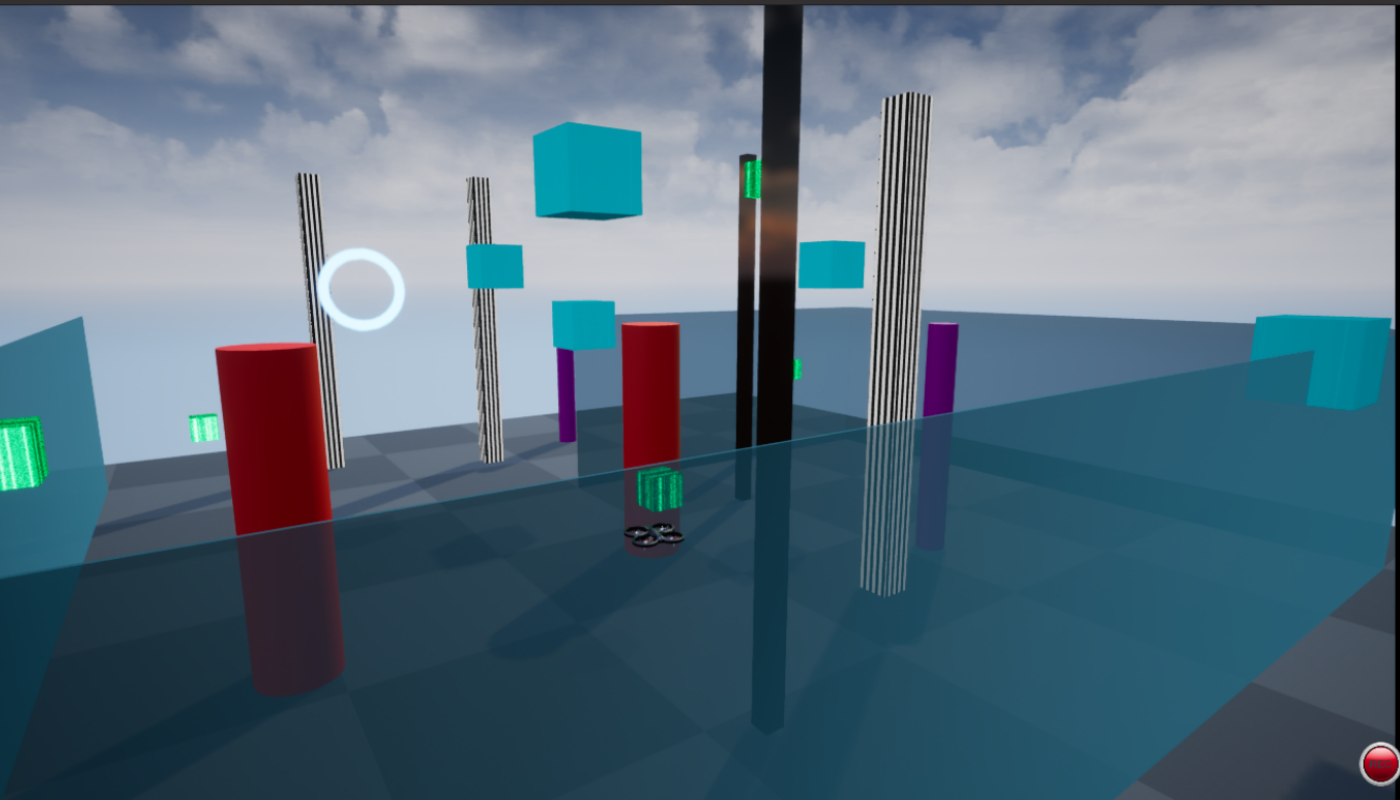}}
\end{subfigure}
\begin{subfigure}[h][\tbold{Hexarotor Environment}]{\label{fig:hex_env}\includegraphics[scale=0.1]{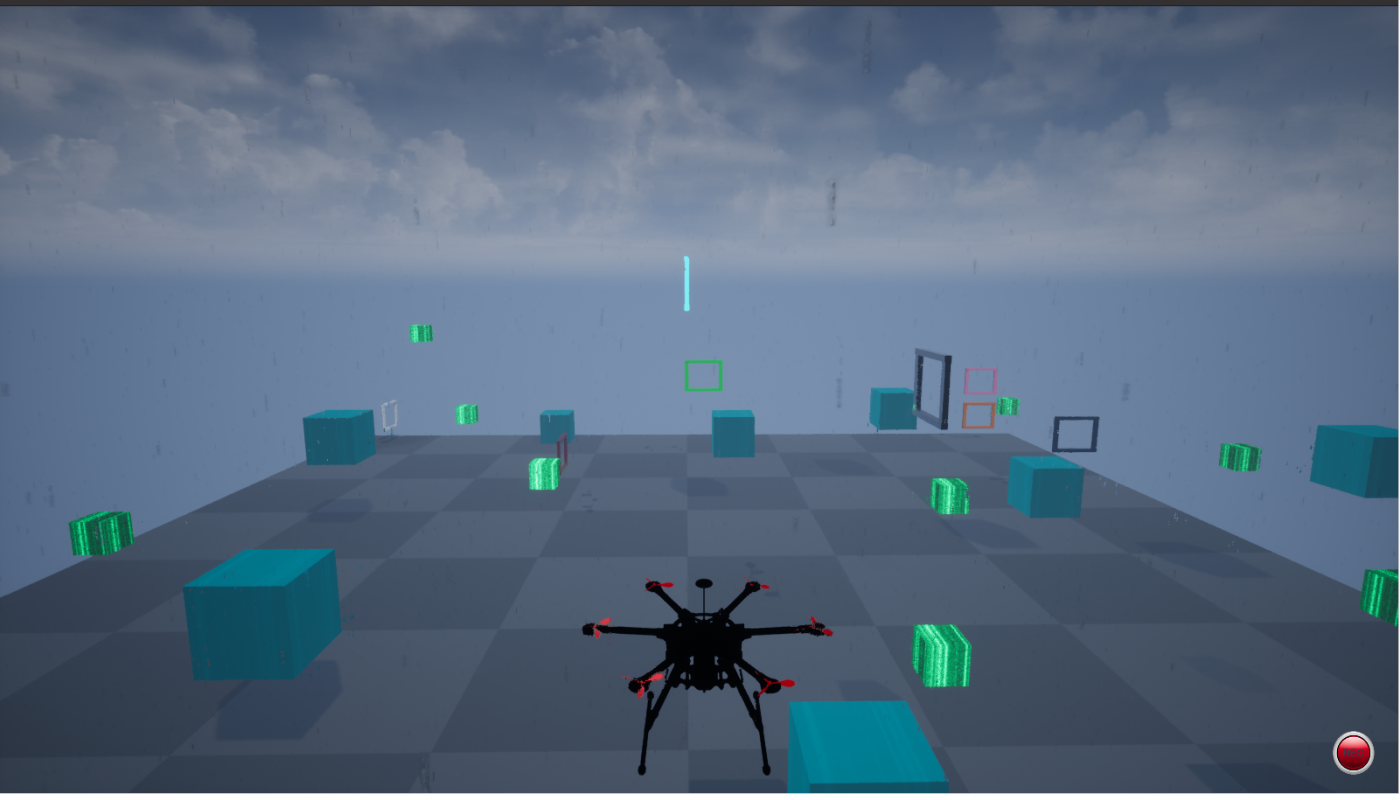}}
\end{subfigure}
%\hspace{10mm}
%\subfigure[Arrival and collisions rates are in bold and dashed lines, respectively.]{\label{fig:Ng2}\includegraphics[scale=0.25]{Figures/turtlebot_safety.png}}
\caption{\tbold{Evaluation Environments.}}
\label{fig:sim_envs}
\vspace{-5mm}
\end{figure*}

 \subsection{Analyzing Safety}
%\textbf{Analyzing Safety.}
We conclude this section by formalizing the notion that BRSL enables safe RL.

\begin{theorem}\label{thm:BRSL_is_safe}
Suppose the assumptions on the robot and environment from Section \ref{sec:pb} all hold, and, at time $k = 0$, the robot is \tbold{at safe state}.
Suppose also that, at each time $k > 0$, the robot \tbold{rolls out} a new $\plan_k$, then adjusts the plan using Algorithm \ref{alg:adjust}.
Then, the robot is guaranteed to be safe at all times $k \geq 0$.
\end{theorem}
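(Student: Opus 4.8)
The plan is to argue by induction on the receding-horizon planning index $k$, maintaining the invariant that at the start of every iteration the robot possesses a \emph{verified safe plan}: a sequence of actions whose overapproximated reachable sets are disjoint from $\statespace\obs$ for all future time and whose tail is a failsafe maneuver that brings the robot to a stop and holds it there indefinitely. Safety at all times then follows because the robot only ever executes an action drawn from such a verified plan. The two facts I would lean on are (i) Algorithm~\ref{alg:LipReachability} returns $\reachapprox_j \supseteq \reachset_j$, so emptiness of $\reachapprox_j \cap \statespace\obs$ certifies $\reachset_j \cap \statespace\obs = \emptyset$, i.e., the true state cannot lie in an obstacle; and (ii) the soundness of the linear-program collision check \eqref{prog:collision_check}, which detects intersection exactly ($v\opt \le 1$ iff the sets meet).

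\textbf{Base case.} At $k=0$ the robot is stationary and (by hypothesis) collision-free, and Algorithm~\ref{alg:rlwithbrsl} is initialized with a safe plan $\plan_0$, namely staying stopped. By Assumption~\ref{assumption:stop} the braking action $\action\brk$ keeps the robot fixed at $\vcstate_0$, so the reachable set for all $j \ge 0$ lies in the obstacle-free stopped configuration, inflated only by the noise zonotope of Assumption~\ref{assumption:noise_zonotope}. Hence the invariant holds at $k=0$.

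\textbf{Inductive step.} Assume the invariant holds at iteration $k$, so a verified safe plan $\plan_{k-1}$ with failsafe tail is in hand. At iteration $k$ the agent rolls out a candidate plan and calls $\adjustfunc{\plan_k,\statespace\obs}$. I would split into two cases. If Algorithm~\ref{alg:adjust} returns within the time limit, then by its terminating check (Line~\ref{ln:adjust:check_if_safe}) every $\reachapprox_j$ is disjoint from $\statespace\obs$ and the plan ends stopped; by fact~(i) this is a new verified safe plan $\plan_k$, so executing its first action keeps the robot safe and preserves the invariant. If instead Algorithm~\ref{alg:adjust} fails or times out (Line~\ref{ln:catch_unsafe}), the robot executes the failsafe maneuver. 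Because $\plan_{k-1}$ was verified safe, its failsafe tail, which by construction continues $\action\brk$ for all $j > k+\nplan$, remains collision-free from the current state and, by Assumption~\ref{assumption:stop}, drives the robot to a stop and holds it there. Thus a verified safe plan again exists after the iteration, closing the induction.

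\textbf{Main obstacle.} The crux is \emph{recursive feasibility}: I must ensure that, whenever new planning fails, the failsafe tail inherited from the previous iteration is genuinely executable and safe from the robot's current state, i.e., that the receding-horizon plans overlap consistently so a braking maneuver is \emph{always} available. This is precisely what the convention that every adjusted plan is padded with $\action\brk$ beyond its horizon (and that $\adjustfunc{\cdot}$ only certifies plans that end stopped) buys us; the remaining care is to confirm that executing one step of $\plan_{k-1}$ and then braking lands inside the already-verified reachable tube, so no fresh collision check is required. A secondary point to state explicitly is that all guarantees are conditional on the overapproximation property of Algorithm~\ref{alg:LipReachability}, which rests on the a~priori bounds on the Lipschitz constant $L^\star$ and covering radius $\delta$; any error there would invalidate the containment $\reachapprox_j \supseteq \reachset_j$ on which the entire argument hinges.
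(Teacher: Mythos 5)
Your proposal follows essentially the same argument as the paper's proof: induction on $k$ with a stationary, braking-safe base case, and an inductive step that splits on whether Algorithm~\ref{alg:adjust} succeeds (in which case the new plan is safe by the overapproximation guarantee of Algorithm~\ref{alg:LipReachability}, the soundness of the collision check \eqref{prog:collision_check}, and the required failsafe tail) or fails (in which case the robot falls back on the previously verified plan). Your explicit framing of the invariant as a ``verified safe plan'' and your remarks on recursive feasibility make the same argument slightly more carefully than the paper states it, but the decomposition and the key facts invoked are identical.
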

\begin{proof}
We prove the claim by induction on $k$.
At time $0$, the robot can apply $\action\brk$ to stay safe for all time.
Assume a safe plan exists at time $k \in \N$.
Then, if the output of Algorithm \ref{alg:adjust} is unsafe (no new plan found), the robot can continue its previous safe plan; otherwise, if a new plan is found, the plan is safe for three reasons.
First, the black-box reachability in Algorithm \ref{alg:LipReachability} is guaranteed to contain the true reachable set of the system \cite[Theorem 2]{alanwar2020data_conf}, because process noise is bounded by a zonotope as in Assumption \ref{assumption:noise_zonotope}.
Second, when adjusting an unsafe plan with Algorithm \ref{alg:adjust}, the zonotope collision check is guaranteed to always detect collisions \cite[Prop. 2]{conf:const_zono} to assess if $\reachapprox_j \cap \statespace\obs$ is empty for each time step $j$ of the plan.
Third, Algorithm \ref{alg:adjust} requires that, after $\nplan$ timesteps, the robot is stopped, so the new plan contains a failsafe manuever, and the robot can safely apply $\action\brk$ for all time $j \geq k+\nplan$.
\end{proof}

% \tbold{We note that the accuracy of the environment model, trained online, does not affect the above result.
% This is because we enforce safety via data-driven reachability; so, Theorem \ref{thm:BRSL_is_safe} will hold as long as the data (collected before runtime) are still representative of the robot's dynamics at runtime.
% We leave updating the data online for future work.}
We note that the accuracy of the environment model, trained online, does not affect safety; Theorem \ref{thm:BRSL_is_safe} holds as long as the offline data are representative of the robot's dynamics at runtime.
We leave updating the data online for future work.

% Next, we evaluate BRSL numerically.
\begin{figure*}
\centering     %%% not \center

\begin{subfigure}[h][Turtlebot3 Reward]{\label{fig:turtlebot_reward}\includegraphics[scale=0.25]{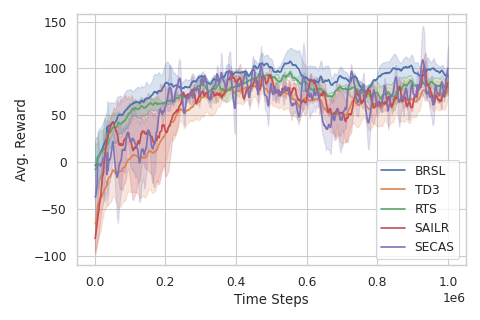}}
\end{subfigure}
\begin{subfigure}[h][Quadrotor Reward]{\label{fig:drone_reward}\includegraphics[scale=0.25]{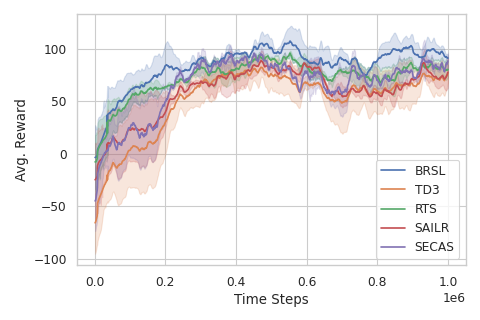}}
\end{subfigure}
\begin{subfigure}[h][Point Reward]{\label{fig:point_reward}\includegraphics[scale=0.25]{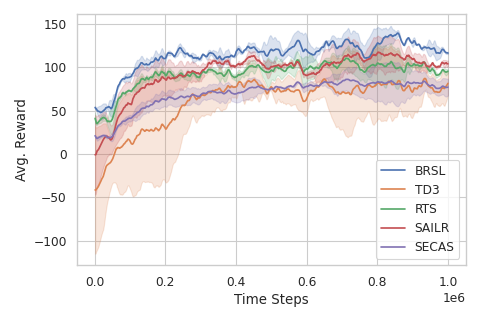}}
\end{subfigure}
\begin{subfigure}[h][\tbold{Hexarotor Reward}]{\label{fig:hexarotor_reward}\includegraphics[scale=0.25]{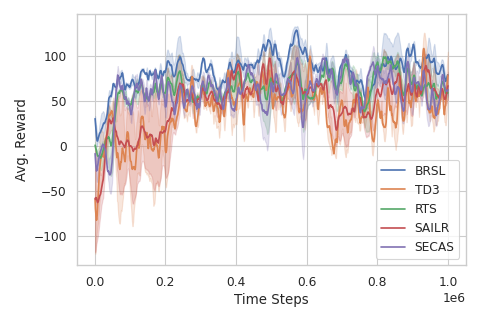}}
\end{subfigure}
%\hspace{10mm}
%\subfigure[Arrival and collisions rates are in bold and dashed lines, respectively.]{\label{fig:Ng2}\includegraphics[scale=0.25]{Figures/turtlebot_safety.png}}

\caption{Average reward over time of BRSL, RTS \cite{conf:modelBasedReachRL}, SAILR \cite{wagener2021safe}, and a vanilla TD3 baseline for each of our experiments.}
\label{fig:sim_results}
\vspace{-2mm}
\end{figure*}

\begin{table*}[ht]
\caption{Goal-based experiment results (best values in bold)}
\label{tbl:comparison}
\vspace{-3mm}
\centering
\resizebox{1.98\columnwidth}{!}{\begin{tabular}{ c|c|c|c|c|c||c|c|c|c|c } 
 %\hline
 & \multicolumn{5}{c||}{Turtlebot} & \multicolumn{5}{c}{Quadrotor} \\
 & BRSL & RTS & SAILR & \tbold{SECAS}&Baseline  & BRSL & RTS & SAILR & \tbold{SECAS}& Baseline  \\
 \hline
 Goal Rate [\%] &
    \textbf{57} & 52 & 48 & \tbold{53} & 42  &
    \textbf{76} & 66 & 61 & \tbold{59} & 54 \\
 Collision Rate [\%] &
    \textbf{0.0} & \textbf{0.0} & 7.3 & \tbold{\textbf{0.0}} &48 
    &\textbf{0.0} & \textbf{0.0} & 9.2& \tbold{\textbf{0.0}} & 59\\
 Mean/Max Speed [m/s] & 
    .07 / \textbf{0.18} & 0.05 / 0.15 & .07 / 0.17 & \tbold{.06 / 0.15} &\textbf{.08} / \textbf{.18} & 
    3.6 / \textbf{7.9} &  3.3 / 7.8 & \textbf{3.7} / \textbf{7.9} & \tbold{3.2 / 7.8} & \textbf{3.7} / \textbf{7.9}\\
 Mean Reward &
    \textbf{86} & 78 & 68 & \tbold{66} & 63 &
    \textbf{82} & 73 & 61 & \tbold{68} & 57 \\
 Mean $\pm$ Std. Dev.  &
    \multirow{2}{*}{50.41 $\pm$ 20.5} & \multirow{2}{*}{100.74 $\pm$ 60.5} & \multirow{2}{*}{30.53 $\pm$ 10.8} & \tbold{\multirow{2}{*}{22.4 $\pm$ 14.66}} & \multirow{2}{*}{\textbf{10.21} $\pm$ 10.05} &
    \multirow{2}{*}{60.33 $\pm$ 20.34} & \multirow{2}{*}{260.85 $\pm$ 140.67} & \multirow{2}{*}{45.31 $\pm$ 20.84} & \tbold{\multirow{2}{*}{39.7 $\pm$ 34.08}} & \multirow{2}{*}{\textbf{20.63} $\pm$ 30.2} \\
    Compute Time [ms] &&&&&&& &&\\
 \hline
    \end{tabular}}
\vspace{-2mm}
\end{table*}

\begin{table*}[ht]
    \caption{Path Following Results (best values in bold)}
    \label{tbl:point_robot}
    \vspace{-2mm}
    \centering
    \resizebox{1.98\columnwidth}{!}{\begin{tabular}{ c|c|c|c|c|c||c|c|c|c|c|c}
    & \multicolumn{5}{c||}{Point Environment} & \multicolumn{5}{c}{\tbold{Hexarotor}} \\
         & BRSL & RTS & SAILR &\tbold{SECAS} &Baseline & \tbold{BRSL} & \tbold{RTS} & \tbold{SAILR} &\tbold{SECAS} &\tbold{Baseline}\\
         \hline
         Collisions [\%] &
            \textbf{0.0} & \textbf{0.0} & 4.9 & \tbold{\textbf{0.0}} &11.4 & \tbold{\textbf{0.0}} & \tbold{2} & \tbold{14} & \tbold{7} & \tbold{63} \\
         Mean Speed [m/s] & 
            0.76 & 0.72 & 0.78 & \tbold{0.68} &\textbf{0.86} & \tbold{3.81} & \tbold{3.4} & \tbold{3.84} & \tbold{3.1} & \tbold{\textbf{3.94}} \\
         Max Speed [m/s] & 
            \textbf{2.00} & 1.89 & \textbf{2.00} & \tbold{1.74} &\textbf{2.00} & \tbold{8.4} & \tbold{8.1} & \tbold{8.2} & \tbold{7.95} & \tbold{\textbf{8.7}} \\
         Mean Reward &
            \textbf{118} & 93 & 88 & \tbold{78} &73 & \tbold{\textbf{84}} & \tbold{78} & \tbold{69} & \tbold{62} & \tbold{57}\\
         Compute Time [ms] &
            30.0 $\pm$ 10.2 & 60.18 $\pm$ 20.09 & 20.49 $\pm$ 10.34 & \tbold{16.42 $\pm$ 8.7} &\textbf{8.64} $\pm$ 1.14 & \tbold{71.93 $\pm$ 34.52} & \tbold{290.96 $\pm$ 157.24} & \tbold{67.86 $\pm$ 22.34} & \tbold{48.53 $\pm$ 28.69} & \tbold{\textbf{32.79 $\pm$ 27.6}} \\
         \hline
    \end{tabular}}
\vspace{-2mm}
\end{table*}

\section{Evaluation} \label{sec:eval}

%\subsection{Safe Robot Navigation to a Goal}
We evaluate BRSL on two types of environments: safe navigation to a goal (a Turtlebot in Gazebo and a quadrotor platform in Unreal Engine $4$), and path following (a point mass based on \cite{achiam2017constrained,wagener2021safe} and a hexarotor in wind in Unreal Engine $4$).
Figure \ref{fig:sim_envs} shows example environments.
%To test the robustness of BRSL, we subject the hexarotor to uncertain wind, which violates Assumption \ref{assumption:stop}, to test the robustness of BRSL.
All code is run on a desktop computer with an Intel i5 11600 CPU and a RTX 3060 GPU. 
%\footnote{We plan to make the code open-source after publication.}.
Our code is available \href{https://github.com/Mahmoud-Selim/Safe-Reinforcement-Learning-for-Black-Box-Systems-Using-Reachability-Analysis}{\underline{\color{blue}{online}}}.
We aim to assess the following:
(a) How does BRSL compare against other safe RL methods (RTS \cite{conf:modelBasedReachRL}, SAILR \cite{wagener2021safe}, and SECAS \cite{dalal2018safe})?
(b) How conservative is BRSL?
(c) Can BRSL run in real time?
% \begin{itemize}
%     \item How does BRSL compare against a vanilla baseline (unsafe) RL agent and other safe RL methods (RTS \cite{conf:modelBasedReachRL}, SAILR \cite{wagener2021safe}, \tbold{and (SECAS) \cite{dalal2018safe}}) in terms of reward and safety?
%     \item How conservative is BRSL in environments where high reward states are near unsafe regions?
%     \item Can BRSL be implemented in real time for safety-critical systems?
% \end{itemize}

% \subsection{Setup}
\textbf{Setup.}
We use TD3 \cite{conf:TD3} as our RL agent, after determining empirically that it outperforms SAC \cite{conf:SAC} and DDPG \cite{conf:DDPG}. %(we justify this choice in the supplementary section).
We randomly initialize the policy.
Since the agent outputs continuous actions, to aid the exploration process, we inject zero-mean Gaussian noise with a variance of $0.5$ that is dampened each time step.
Note that this does not affect safety since our safety layer adjusts the output of the RL agent.

For each robot, to perform reachability analysis with Algorithm \ref{alg:LipReachability}, we collect $500$ time steps of noisy state/input data (as per \eqref{eq:data}) offline in an empty environment while applying random control inputs.
We found this quantity of data sufficient to ensure safety empirically; we leave a formal analysis of the minimum amount of data for future work.

We parameterize the environment \tbold{model} as an ensemble of neural networks, each modeling a Gaussian distribution over future states and observations.
Each network has $4$ layers, with hidden layers of size 200, and leaky ReLU activations with a negative slope of $0.01$.
We use a stochastic model wherein the ensemble predicts the parameters of a probability distribution, which is sampled to produce a state as in \cite{chua2018deep}.

% \subsubsection{Goal-Based Environments
\textit{Goal-Based Environments.}
The Turtlebot 3 and the quadrotor seek to navigate to a random circular goal region $\statespace\goal \subset \statespace$ while avoiding randomly-generated obstacles $\statespace\obs \subset \statespace$. %, as shown in Fig.~\ref{fig:turtleee} for the Turtlebot and Fig.~\ref{fig:droneee} for the quadrotor.
% Both Turtlebot's and the quadrotor's goals are randomly selected at the beginning of each episode.
Each robot starts in a safe location at the center of the map.
Each task is episodic, ending if the robot reaches the goal, crashes, or exceeds a time limit.
Both robots have uncertain, noisy dynamics as in \eqref{eq:sys}.
We discretize time at $10$ Hz.

The Turtlebot's control inputs are longitudinal velocity in $[0.00,0.25]$ m/s and angular velocity in $[-0.5, 0.5]$ rad/s (these are the bounds of $\actionspace_k$).
The robot has wheel encoders, plus a planar lidar that generates $18$ range measurements evenly spaced in a $180^\circ$ arc in front of the robot.
The robot requires $\nbrk = 6$ time steps to stop, so we set $\nplan = 8$.

The quadrotor control inputs are commanded velocities up to 5 m/s in each spatial direction at each time step.
\tb{We note that we also experimented with learning low-level rotor speeds versus high-level velocity commands, and found that the velocity commands created the fairest testing conditions across all agents}. The robot is equipped with an IMU and a 16-channel lidar which receives range measurements around the robot in a $50^\circ$ vertical arc and a $360^\circ$ horizontal arc.
The robot has $\nbrk = 10$, so we set $\nplan = 11$.

\textit{Path Following Environments.}
These experiments assess BRSL's conservativeness is by placing the highest reward adjacent to obstacles.

The goal for the point robot is to follow a circular path of radius $r$ as quickly as possible while constrained to a region smaller than the target circle.
The point robot is a 2-D double integrator with position and velocity as its state: $\vcstate_k = (x_k,y_k,\dot{x}_k,\dot{y}_k)$.
It has a maximum velocity of 2 m/s, and its control input is acceleration up to 1 m/s$^2$ in any direction.
We use these dynamics as in \cite{achiam2017constrained,wagener2021safe} to enable a fair test against other methods that require a robot model.
We define a box-shaped safe set (the complement of the obstacle set) as $\statespace\safe = \{\vcstate_k \in \statespace : |x_k| \le x_{\max},  |y_k| \le y_{\max}\}$, with $\norm{(x_{\max}, y_{\max})}_2 < r$.
We use a reward that encourages traveling quickly near the unsafe set: $\rewfunc (\RLstate_k,\RLaction_k) = \frac{(\dot{x}_k, \dot{y}_k) \cdot (-y_k, x_k)}{1 + \big| \norm{(x_k, y_k)}_2 - r\big|}$.
% \begin{align}\begin{split}
%     \rewfunc (\RLstate_k,\RLaction_k) = \frac{(\dot{x}_k, \dot{y}_k) \cdot (-y_k, x_k)}{1 + \big| \norm{(x_k, y_k)}_2 - r\big|}.
% \end{split}\end{align}

The hexarotor has the same setup as the quadrotor, but with the addition of wind as an external disturbance.
The goal of the hexarotor is to pass through $10$ checkpoints in a fixed order while subject to wind (constant speed and direction) and randomly-placed obstacles.
Note, offline data collection was performed under wind conditions.
\textbf{Results and Discussion.}
\tb{The results are summarized in Tables \ref{tbl:comparison} and \ref{tbl:point_robot}, and in Figure \ref{fig:sim_results}.
%RTS is almost the same in terms of reward and arrival rate as our BRSL framework.
BRSL outperforms the other methods in terms of reward and safety, is not overly conservative, and can operate in real time, despite lacking a model of the robot \emph{a priori}.
While SAILR and the baseline RL agent achieved higher speeds, both experienced collisions, unlike BRSL, RTS, and SECAS.
In contrast to RTS, which chooses from a low-dimensional parameterized plans, BRSL outputs a more flexible sequence of actions.
Furthermore RTS' planning time increases with state space dimension due to computing a halfspace representation of reachable set zonotopes, which grows exponentially in the number of generators \cite{althoffphdthesis}.
BRSL avoids this computation by using \eqref{prog:collision_check}.
Instead, the quantity of data for BRSL determines the computation time of $\vc{M}_j$ from Algorithm \ref{alg:LipReachability}, used for reachability and adjusting unsafe actions.
Therefore, one can ensure the amount of data allows real time operation; choosing the data optimally is left to future work.
Finally, BRSL uses zonotopes to exactly represent safety constraints, whereas SECAS uses a more conservative first-order approximation.
This results in BRSL achieving higher reward with slightly slower computation time (but still fast enough for real time operation).}
% It is possible RTS can be more efficient with hyperparameter tuning; our focus is BRSL's real-time capability.

% \tb{
% We explain BRSL's performance advantages as follows.
% First, BRSL outputs a more flexible sequence of actions than RTS, which chooses from a low-dimensional parameterized set of plans.
% % Second, SAILR does not enforce hard safety constraints, unlike BRSL.
% Second, BRSL exactly represents safety constraints, whereas SECAS uses a more conservative first-order approximation.
% }
% \begin{table}[ht]
%     \caption{Mean $\pm$ std. dev. of computation time in seconds.}
%     \label{tab:runtimes}
%     \centering
%     \begin{tabular}{c | c | c | c | c }
%          Robot& BRSL & RTS & SAILR & Baseline \\
%          \hline
%          Turtlebot3 & 0.05 $\pm$ 0.02 & 0.10 $\pm$ 0.06 & 0.03 $\pm$ 0.01 & 0.01 $\pm$ 0.01 \\
%          Quadrotor & 6 $\pm$ 2 & 26 $\pm$ 14 & 4.5 $\pm$ 2 & 2 $\pm$ .3 \\
%          %Hexarotor (12/6) & x $\pm$ y & x $\pm$ y & x $\pm$ y & x $\pm$ y \\
%          Point & 3 $\pm$ 1 & 6 $\pm$ 2 & 2 $\pm$ 1 & .8 $\pm$ .07 \\
%          \hline
%     \end{tabular}
% \end{table}

\section{Conclusion}\label{sec:con}

This paper proposes the Black-box Reachability Safety Layer, or BRSL, for safe RL without having a system model \textit{a priori}.
BRSL ensures safety via data-driven reachability analysis and a novel technique to push reachable sets out of collision.
To enable the RL agent to make dynamics-informed decisions, BRSL also learns an environment model online, which does not affect the safety guarantee.
The framework was evaluated on four robot motion planning problems, wherein BRSL respects safety constraints while achieving a high reward over time in comparison to state-of-the-art methods.
For future work, we will explore continuous-time settings, reducing the conservativeness of our reachability analysis, and minimizing the amount of data needed to guarantee safety.

\bibliographystyle{IEEEtran}
\bibliography{ref} 
%\bibliographystyle{ACM-Reference-Format}
% \bibliography{ref}
% \bibliographystyle{icml2021}
% \clearpage

% \appendix
% \input{Sections/A_implementation_details}

\end{document}